\definecolor{darkgreen}{rgb}{0.00,0.5,0.00}
\newtheorem{thm}{Theorem}
\newtheorem{prop}{Proposition}
\newtheorem{lemma}{Lemma}
\newtheorem{remark}{Remark}
\newtheorem{ass}{Assumption}
\definecolor{myteal}{RGB}{27,158,119}
\definecolor{myorange}{RGB}{217,95,2}
\definecolor{myred}{RGB}{231,41,138}
\definecolor{mypurple}{RGB}{152,78,163}
\definecolor{myblue}{rgb}{.9, .9, 1}
\definecolor{mygreen}{RGB}{0,100,0}
\definecolor{mycyan}{rgb}{0.88,1,1}
\definecolor{mydarkred}{RGB}{192,47,25}
\newcommand*\circled[1]{\tikz[baseline=(char.base)]{
		\node[shape=circle,draw,inner sep=0.3pt] (char) {#1};}}
\def \S {\mathbf{S}}
\def \A {\mathcal{A}}
\def \R {\mathbb{R}}
\def \w {\mathbf{w}}
\def \v {\mathbf{v}}
\def \x {\mathbf{x}}
\def \E {\mathbb{E}}
\def \x {\mathbf{x}}
\def \1 {\mathbf{1}}
\def \y {\mathbf{y}}
\def \u {\mathbf{u}}
\def \y {\mathbf{y}}
\def \u {\mathbf{u}}
\def \F {\mathcal{F}}
\def \B {\mathcalB}
\def \y {\mathbf{y}}
\def \E {\mathbb{E}}
\def \x {\mathbf{x}}
\def \u {\mathbf{u}}
\def \w {\mathbf{w}}
\def \R {\mathbb{R}}
\def \S {\mathcal{S}}
\def \N {\mathcal{N}}
\def \A {\mathcal{A}}
\def \v {\mathbf{v}}
\def \B {\mathcal{B}}
\def \T {\mathcal{T}}
\def \F {\mathscr{F}}
\def \m {\mathbf{m}}
\newcommand{\Norm}[1]{\left\|#1\right\|}
\def \F {\mathcal{F}}
\def \B {\mathcalB}
\def \O {\mathcal O}
\newlength\mytemplen
\newsavebox\mytempbox
\newcommand\mybluebox{%
	\@ifnextchar[
	{\@mybluebox}%
	{\@mybluebox[0pt]}}
\def\@mybluebox[#1]{%
	\@ifnextchar[
	{\@@mybluebox[#1]}%
	{\@@mybluebox[#1][0pt]}}
\def\@@mybluebox[#1][#2]#3{
	\sbox\mytempbox{#3}%
	\mytemplen\ht\mytempbox
	\advance\mytemplen #1\relax
	\ht\mytempbox\mytemplen
	\mytemplen\dp\mytempbox
	\advance\mytemplen #2\relax
	\dp\mytempbox\mytemplen
	\colorbox{myblue}{\hspace{1em}\usebox{\mytempbox}\hspace{1em}}}
\def \y {\mathbf{y}}
\def \E {\mathbb{E}}
\def \x {\mathbf{x}}
\def \u {\mathbf{u}}
\def \w {\mathbf{w}}
\def \R {\mathbb{R}}
\def \S {\mathcal{S}}
\def \N {\mathcal{N}}
\def \A {\mathcal{A}}
\def \v {\mathbf{v}}
\def \B {\mathcal{B}}
\def \T {\mathcal{T}}
\def \F {\mathcal{F}}
\icmltitlerunning{GraphFM: Improving Large-scale GNN training via Feature Momentum}
\begin{document}

\twocolumn[
\icmltitle{GraphFM: Improving Large-Scale GNN Training via Feature Momentum}


\icmlsetsymbol{equal}{*}

\begin{icmlauthorlist}
\icmlauthor{Haiyang Yu*}{tamu}
\icmlauthor{Limei Wang*}{tamu}
\icmlauthor{Bokun Wang*}{uiowa}
\icmlauthor{Meng Liu}{tamu}
\icmlauthor{Tianbao Yang}{uiowa}
\icmlauthor{Shuiwang Ji}{tamu}
\end{icmlauthorlist}

\icmlaffiliation{tamu}{Department of Computer Science \& Engineering, Texas A\&M University, TX, USA}
\icmlaffiliation{uiowa}{Department of Computer Science, The University of Iowa, IA, USA}

\icmlcorrespondingauthor{Shuiwang Ji}{sji@tamu.edu}

\icmlkeywords{Machine Learning, ICML}

\vskip 0.3in
]



\printAffiliationsAndNotice{\icmlEqualContribution} 

\begin{abstract}
Training of graph neural networks (GNNs) for large-scale node classification is challenging. A key difficulty lies in obtaining accurate hidden node representations while avoiding the neighborhood explosion problem. Here, we propose a new technique, named feature momentum (FM), that uses a momentum step to incorporate historical embeddings when updating feature representations. We develop two specific algorithms, known as GraphFM-IB and GraphFM-OB, that consider in-batch and out-of-batch data, respectively.
GraphFM-IB applies FM to in-batch sampled data, while GraphFM-OB applies FM to out-of-batch data that are 1-hop neighborhood of in-batch data.
We provide a convergence analysis for GraphFM-IB and some theoretical insight for GraphFM-OB. Empirically, we observe that GraphFM-IB can effectively alleviate the neighborhood explosion problem of existing methods. In addition, GraphFM-OB achieves promising performance on multiple large-scale graph datasets.

\end{abstract}

\section{Introduction}
Graph neural networks (GNNs) achieve promising performance on many graph learning tasks, such as node classification~\cite{kipf2017semi, hamilton2017inductive, velivckovic2018graph,  liu2020towards}, graph classification~\cite{xu2018powerful,gao2019graph,gao2021topology}, link prediction~\cite{zhang2018link,CaiMlinkAAAI20}, and molecular property prediction~\cite{gilmer2017neural,liu2022spherical}. In general, GNNs learn from graphs by the popular message passing framework~\cite{gilmer2017neural}. Specifically, we usually perform a recursive aggregation scheme in which each node aggregates representations from all 1-hop neighbors. Various GNNs~\citep{kipf2017semi,velivckovic2018graph,xu2018powerful,li2019deepgcns} mainly differ in the employed aggregation functions. Such recursive aggregation scheme has been shown to be effective for learning graph representations. However, it leads to the inherent neighborhood explosion problem~\cite{hamilton2017inductive}, since the number of neighbors grows exponentially with the depth of GNNs.


Due to such inherent problem, we have difficulties to apply GNNs to large-scale graphs. Notably, many real-world graphs, such as citation networks, social networks, and co-purchasing networks, are large-scale graphs \cite{hu2021ogblsc} with massive numbers of nodes and edges. Thus, it is hard to obtain the complete computational graph, which contains the exploded neighborhood, with limited GPU memory when training on large-scale graphs. 
To tackle this,
efficient training algorithms~\cite{huang2018adapt, gao2018large, bojchevski2020scaling, cwdlydw2020gbp, huang2020combining, you2020l2, li2021training, wan2022pipegcn, liu2022exact} have been developed to update model parameters with reasonable computational complexity and memory consumption. They aim to obtain accurate hidden node representations and gradient estimations while avoiding the neighborhood explosion problem.
The mainstream efficient training algorithms are sampling methods. They perform node-wise, layer-wise, or graph sampling to alleviate the neighborhood explosion problem.
However, sampling incurs unavoidable errors in estimation on hidden node embedding and gradients. In section \ref{Method}, we formulate GNNs as recursive nonlinear functions and show that the gradients of the sampling methods suffer from estimation error due to the nonlinearity. In addition, establishing a convergence guarantee of Adam-style algorithms is another challenge for sampling based methods~\cite{cong2020importance}.

In this work, we propose a novel technique, known as feature momentum (FM), to address these problems.
FM applies a momentum step on historical node embeddings for estimating accurate hidden node representations. Based on FM, we develop two algorithms based on sub-sampled node estimation and pseudo-full neighborhood estimation, respectively. The first algorithm, known as GraphFM-IB, applies FM after sampling the in-batch nodes. GraphFM-IB samples the 1-hop neighbors of target nodes recursively and then updates historical embeddings of them with aggregated embeddings from their sampled neighbors using a momentum step. 
The second algorithm, known as GraphFM-OB, uses cluster-based sampling to draw in-batch nodes and employs the FM to update historical embeddings of 1-hop out-of-batch nodes with the message passing from in-batch nodes. The operations of GraphFM-OB are given in Figure \ref{fig:graphfm_ob_comp}. 
GraphFM-IB was shown theoretically to converge to a stationary point with enough iterations and a constant batch size. GraphFM-OB is shown to have some theoretical insight of possibly alleviating the staleness problem of historical embeddings. 

We perform extensive experiments to evaluate our methods on multiple large-scale graphs. Results show that GraphFM-IB outperforms GraphSAGE and achieves comparable results with other baselines. Importantly, when sampling only one neighbor, GraphFM-IB achieves similar performance as when large batch sizes are used, thus alleviating the neighborhood explosion problem. We also show that GraphFM-OB outperforms current baselines on various large-scale graph datasets.

\section{Related Work}
GNNs are powerful methods for learning graph representations \cite{gori2005new, scarselli2008graph}. Commonly used GNNs include GCN \cite{kipf2017semi}, GCNII \cite{chen2020simple}, and PNA \cite{corso2020pna}. Due to the neighborhood explosion problem, full-batch training of GNNs on large-scale graphs incurs prohibitive GPU memory consumption. Therefore, it is practically desirable to develop efficient training strategies on large-scale graphs. Recently, several categories of sampling methods have been proposed to reduce the size of the computational graph during training, including node-wise, layer-wise, and graph sampling methods.


\textbf{Node-wise sampling methods} uniformly sample a fixed number of neighbors when performing recursive aggregation and usually learn model parameters using gradients on a batch of nodes instead of all nodes. Such training strategy is originally proposed in GraphSAGE \cite{hamilton2017inductive}.
Then VR-GCN \cite{chen2018stochastic} integrates historical embeddings with GraphSAGE to reduce the estimation variance. Moreover, it provides a convergence analysis, which demonstrates that VR-GCN can converge to a local optimum with infinite iterations.
However, the analysis assumes the unbiasedness of stochastic gradients, which is usually unrealistic. 
In addition, although node-wise sampling methods reduce memory requirement to some extend, they still suffer from the neighborhood explosion problem.

\textbf{Layer-wise sampling methods} select a fixed number of nodes in each layer according to their defined sampling distribution. They overcome the neighborhood explosion problem because the number of neighbors grows only linearly with depths.
FastGCN \cite{chen2018fastgcn} performs such sampling independently in each layer. In contrast,
LADIES \cite{zou2019layer} moves one step forward to consider the dependency of sampled nodes between layers. 
They both provide the unbiasedness and variance analysis of node embedding for one-layer GNNs without nonlinear function. MVS-GCN \cite{cong2020minimal} further analyzes multi-layer GNNs with nonlinearity. It formulates the training of GNNs as a stochastic compositional optimization problem. Since the stochastic estimator of the gradient is biased, a large batch size is required to eliminate the bias and variance for the convergence guarantee. Layer-wise sampling methods can effectively alleviate the memory bottleneck. Nonetheless, layer-wise sampling methods have incurred computational overhead since we have to perform extensive sampling during training. 

\textbf{Graph sampling methods} sample subgraphs or clusters to construct minibatches before training. Note that we only need to obtain such subgraphs or clusters once as a preprocessing step. Specifically, GraphSAINT \cite{zeng2019graphsaint} samples nodes, edges or random walks to construct subgraphs. 
SHADOW \cite{zeng2021decoupling} samples subgraphs according to PageRank scores of local neighbors for each node. It builds subgraphs for each node and converts a node classification task into a graph classification task.
ClusterGCN \cite{chiang2019cluster} aims to reduce the effect of cutting graphs by spliting a graph into separate clusters with clustering algorithms.
GNNAutoScale \cite{Fey/etal/2021} proposes to incorporate historical embeddings of the dropped edges among the clusters, thereby obtaining more accurate full-batch neighborhood estimation.

Another orthogonal direction for large-scale graph training is precomputing methods \cite{wu2019simplifying, frasca2020sign, liu2022neighbor2seq}. They aggregate the multi-hop features for each node on the raw input features by precomputing and then feed them into subsequent models. Since the precomputing procedure does not involve any learnable parameters, the training of each node is independent. They are efficient but do not employ powerful GNNs with nonlinearity.

\section{The Proposed Feature Momentum Method} \label{Method}

In this section, we present the proposed methods. We first introduce some notations, then present the motivation of the proposed algorithmic design, and then discuss two methods for improving GNN training. 

{\bf Notations.} Let $\mathcal V=\{1, \ldots, n\}$ denote a set of nodes with input features denoted by $\x_v\in\R^{d_0}$ for node $v$. For any node $v$, we denote by $\N_v$ the neighboring nodes of $v$ that have connections with $v$. Let $h^k_v\in\R^{d_k}$ denote the feature vector of the $v$-th node at the $k$-th layer, where $h^0_v=\x_v$ and $h^k_v, \forall k=1, \ldots, K$ are recursively computed.  Let $W^k$ denote the model parameters at the $k$-th layer. For simplicity, we consider supervised learning tasks, where at the output layer we optimize the following loss:
\begin{align}
    \min_{W_1, \ldots, W_{K+1}}F(W)=\sum_{v\in\S}\ell(W; h^K_v)
\end{align}
where $\S\subset\mathcal V$ denotes a subset of nodes that of interest for supervised learning.  Let $H^k=[h^k_1, \ldots, h^k_n]\in\R^{n\times d_k}$ denote concatenated representations of all nodes at $k$-th layer. We can write $F(W)=f_{K+1}(H^K)$ for some non-linear function $f_{K+1}$.

{\bf Motivation of Algorithmic Design.} The feature representations of all nodes at each layer are recursively computed. We can express $H^{k}$ as 
\begin{align*}
    H^{k} = f_{k}(H^{k-1}),
\end{align*}
where $f_{k}$ is a parameterized non-linear function. As a result, we can write the optimization problem as a multi-level nested function: 
\begin{align}\label{eqn:gnn}
    F(W) = f_{K+1}\circ f_K\circ\ldots\circ f_1(H^0).
\end{align}
The two challenges in solving GNN optimization problem are (i) the representation of a node at a higher layer $h_v^k$ might depend on the representations of a large number nodes in the previous layer; (ii) not all nodes' representations can be re-computed at every iteration when the total number of nodes is large.  To this end, the GNN training is usually companioned with sampling of nodes at each layer.  However, using sub-sampled nodes to compute the feature representations at each layer might lead to a large optimization error when the sampled neighborhood of each node is not large enough. Historical embeddings of non-sampled  neighboring nodes have been used to reduce this error. However, depending on the mini-batch size, the historical embeddings could be outdated and might also have a large error. To address these issues, we introduce node-wise momentum features. To motivate this idea, we consider the following two-level functions: 
\begin{align*}
    F(\w) = f_1\circ f_2(\w). 
\end{align*}
We assume that  $f_1, f_2$ and their gradients are expensive to evaluate but unbiased  stochastic versions are readily computed. In particular, we let $f_1(\cdot; \xi)$ and $f_2(\cdot; \zeta)$ denote the  stochastic versions of $f_1$ and $f_2$ depending on random variable $\xi$ and $\zeta$, such that 
\begin{align*}
    &\E[f_1(\cdot; \xi)] = f_1(\cdot), \quad \E[\nabla f_1(\cdot; \xi)] = \nabla f_1(\cdot)\\
    &\E[f_2(\cdot; \zeta)]=f_2(\cdot), \quad \E[\nabla f_2(\cdot; \zeta)]= \nabla f_2(\cdot)
\end{align*}
The gradient of $F(\w)$ is given by $\nabla F(\w) = \nabla f_2(\w)^{\top}\nabla f_1(f_2(\w))$. An unbiased estimation is give by $\nabla\widehat F(\w)= \nabla f_2(\w; \zeta)^{\top}\nabla f_1(f_2(\w); \xi)$. It is notable that if  $f_2(\w)$ inside $f_1$ is simply replaced by $f_2(\w; \zeta)$, it will lead to a biased estimator due to non-linearity of $f_1$ and hence suffers from a large estimator error. To address this issue, existing works for two-level stochastic compositional optimization problems have proposed to use the momentum estimator (i.e., moving average)  of $f_2(\w)$ by 
\begin{align}
    \hat f_{2,t} = (1-\beta) \hat f_{2,t-1} + \beta f_2(\w_t; \zeta).
\end{align}
While this idea seems straightforward in light of multi-level stochastic compositional optimization, there are still several challenges to be addressed in the customization for GNN training: (i) the non-linear function $f_i$ in~(\ref{eqn:gnn}) does not have an unbiased estimator except for $f_{K+1}$; hence a different perspective of decomposition is required by viewing $f_{k}(H^{k-1})=\sigma (\hat f_k (H^{k-1}))$, where $\sigma$ is a simple deterministic activation function and $\hat f_k(\cdot)$ is a linear function of input; (ii) we cannot obtain unbaised estimators for all nodes in $H^{k-1}$; hence coordinate-wise sampling needs to be considered for the analysis;  (iii) if historical embeddings of out-of-batch nodes are used, which do not  give unbiased estimator for each in-batch node,  ad-hoc methods using the momentum averaging for out-of-batch nodes need to be developed; (iv) last but not least, how to provide theoretical analysis  (e.g., convergence analysis) for the proposed methods by using Adam-style update, which is mostly used for GNN training.  To the best of our knowledge, no convergence analysis of an Adam-style method has been given for multi-level compositional optimization. 

In the following two subsections, we will address these issues. We consider two  representative GNN training methods, i.e., sub-sampled neighborhood estimation and pseudo-full neighborhood estimation, where the former uses only sub-sampled neighbors for estimating the feature representations of in-batch nodes, and the latter uses all neighbors for estimating the feature representations of in-batch nodes except that for the out-of-batch neighbors historical embeddings are used. For sub-sampled neighborhood estimation, we develop a stochastic method  by using feature momentum  and provide a convergence analysis for the Adam-style method. For pseudo-full neighborhood estimation, we develop an ad-hoc method on top of a state-of-the-art method GNNAutoScale and also provide some theoretical analysis for the estimation error. 

\begin{algorithm}[t]
\caption{GraphFM-IB}\label{alg:1}
\begin{algorithmic}[1]
\REQUIRE $\eta,\{\beta_{0,k}\},\beta_1, \beta_2, \tilde h^{k,0}_v=0, \forall v\in\mathcal V$
\ENSURE $\w_T$
\FOR{$t=1,...T$}
\FOR{$k=1,\ldots, K$}
\STATE Draw a batch of nodes $\mathcal D_k$
\FOR{$v\in\mathcal D_k$}
\STATE Sample a neighborhood denoted by $\B_v$
\STATE Compute $\tilde h^{k,t}_v$ according to~(\ref{eqn:hu})
\STATE Compute $\hat h^{k,t}_v$ according to~(\ref{eqn:hf})
\STATE Normalize $\hat h^{k,t}_v$ appropriately (in practice) 
\ENDFOR
\ENDFOR
\STATE Compute the stochastic gradient estimator $G_t$ by $$G_t= \frac{1}{|\mathcal D_K|}\sum_{v\in\mathcal D_k}\nabla \ell(W_t; \hat h^{K, t}_v)$$
\STATE Compute $\mathbf v^{t+1}_1 = (1-\beta_1)\mathbf v^t_1 + \beta_1 G_t$
\STATE Compute $\mathbf v^{t+1}_2 = (1-\beta_2)\mathbf v^t_2 + \beta_2 G^2_t$
\STATE Update $W^{t+1}=W^t -\frac{\eta}{\sqrt{\mathbf v^{t+1}_2} + \epsilon_0}\mathbf v^{t+1}_1$
\ENDFOR
\end{algorithmic}
\end{algorithm}

\begin{algorithm}[t]
\caption{GraphFM-OB}\label{alg:2}
\begin{algorithmic}[1]
\REQUIRE $\eta,\{\beta_{0,k}\},\beta_1, \beta_2, \tilde h^{k,0}_v=0, \forall v\in\mathcal V$
\ENSURE $\w_T$
\FOR{$t=1,...T$}
\STATE Draw a batch of nodes $\mathcal D_t$
\FOR{$k=1,\ldots, K$}
\STATE Compute $\tilde h_v^{k,t}$ for $v\not\in\mathcal D_t$ according to~(\ref{eqn:th})
\FOR{$v\in\mathcal D_t$}
\STATE Compute $h^{k,t}_v$ according to~(\ref{eqn:fh})
\STATE Normalize $h^{k,t}_v$ appropriately (in practice) 
\ENDFOR
\ENDFOR
\STATE Compute the stochastic gradient estimator $G_t$ by $$G_t= \frac{1}{|\mathcal D_t|}\sum_{v\in\mathcal D_t}\nabla \ell(W_t; h^{K, t}_v)$$
\STATE Compute $\mathbf v^{t+1}_1 = (1-\beta_1)\mathbf v^t_1 + \beta_1 G_t$
\STATE Compute $\mathbf v^{t+1}_2 = (1-\beta_2)\mathbf v^t_2 + \beta_2 G^2_t$
\STATE Update $W^{t+1}=W^t -\frac{\eta}{\sqrt{\mathbf v^{t+1}_2} + \epsilon_0}\mathbf v^{t+1}_1$
\ENDFOR
\end{algorithmic}
\end{algorithm}

\subsection{Feature Momentum for In-Batch Nodes}
 Let us consider the computation of one node's representation.  A basic operator in GNN training is to compute new feature embeddings  of each node from its neighborhood nodes using their lower level feature embeddings. This operator can be expressed by the following two steps: 
\begin{align}
    &h^k_{\mathcal N_v} = \text{Aggregate}_k(\{h_u^{k-1}, \forall u\in\N(v)\})\\
    &h^k_{v} = \sigma(W^k\cdot\text{Concat}(h^{k-1}_v, h^k_{\mathcal N_v})),
\end{align}
where the first step aggregates the representations of the nodes in the immediate neighborhood of node $v$ into a single vector $h^k_{\N(v)}$, and the second step  concatenates the node’s current representation $h^{k-1}_v$, with the aggregated neighborhood vector and passes it through a non-linear layer with an activation function $\sigma(\cdot)$ and weights $W_k$. Of particular interest, we consider the mean aggregator.
\begin{align*}
 &\mathcal A(\{h^{k-1}_u: u\in \mathcal N_v\cup \{v\}\})=\frac{1}{|\N_v|+1}\sum_{u\in\N(v)\cup\{v\}}h^{k-1}_u\\
  &  h^k_v = \sigma\left(W^k\cdot \mathcal A(\{h^{k-1}_u: u\in \mathcal N_v\cup \{v\}\}) \right),
\end{align*}
where $\mathcal A(\cdot)$ denotes the mean operator.  To tackle the first challenge that involves a large neighborhood size, we use feature momentum with stochastic sampling to estimate the aggregated feature vector (before taking sigmoid) at the $t$-th iteration. To this end, we let $\B_v\subset\N_v$ denote a sub-sampled neighborhood of node $v$, and let $\bar\B_v=\B_v\cup\{v\}$. Then, we estimate the aggregated feature vector by 
\begin{align}\label{eqn:FM}
 \resizebox{0.9\hsize}{!}{$
    \tilde h_v^{k, t}= (1-\beta_{0,k})\tilde h^{k, t-1}_v+ \beta_{0,k}\hat\A(\{\hat h^{k-1, t}_u, u\in\bar\B_v\}),$}
\end{align}
where $\beta_{0,k}\in(0,1)$ is a momentum parameter, and $\hat\A(\{h^{k-1}_u, u\in\bar\B_v\})$ denotes an unbiased estimator of $\mathcal A(\{h^{k-1}_u: u\in \mathcal N_v\cup \{v\}\})$. We can compute it by 
\begin{align}
    \hat\A(\{\hat h^{k-1, t}_u, u\in\bar\B_v\})&=\frac{1}{|\N_v+1|}\hat h^{k-1, t}_v\\
    &+ \frac{|\N_v|}{|\N_v|+1}\frac{1}{|\B_v|}\sum_{u\in\B_v}\hat h^{k-1, t}_u.\notag
\end{align}
To tackle the second challenge that computing $\tilde h_v^k$ for all nodes $v\in\mathcal V$ is expensive, we only compute it for a sub-sampled set of nodes denoted by $\mathcal D_k$ (in-batch nodes), i.e., 
\begin{align}\label{eqn:hu}
    &\tilde h_v^{k, t}=\\
    &\left\{\begin{array}{ll}\hspace*{-0.05in}\tilde h^{k, t-1}_v &\hspace*{-0.8in} \text{ if } v\not\in\mathcal D_k\\\hspace*{-0.05in} (1-\beta_{0, k})\tilde h^{k, t-1}_v+ \beta_{0,k} \hat\A(\{\hat h^{k-1, t}_u, u\in\bar\B_v\})& \hspace*{-0.1in}\text{ o.w.}\end{array}\right.\nonumber
\end{align}
With these momentum features, we can update the next layer feature by 
\begin{align}\label{eqn:hf}
    \hat h^{k,t}_v = \sigma(W^t_k\cdot \tilde h_v^{k, t}).  
\end{align}
The above procedure will be repeated for $K$ times for computing the output feature representations $\hat h^{K,t}_v$ for sub-sampled $v\in\mathcal D_{K}$. 

We present the detailed steps of the proposed method based on sub-sampled neighborhood estimation in Algorithm~\ref{alg:1}, to which we refer as GraphFM-IB. The model parameter $\w_{t+1}$ is updated by the Adam-style update.

\begin{figure*}[t]
    \centering
    \includegraphics[width=2\columnwidth]{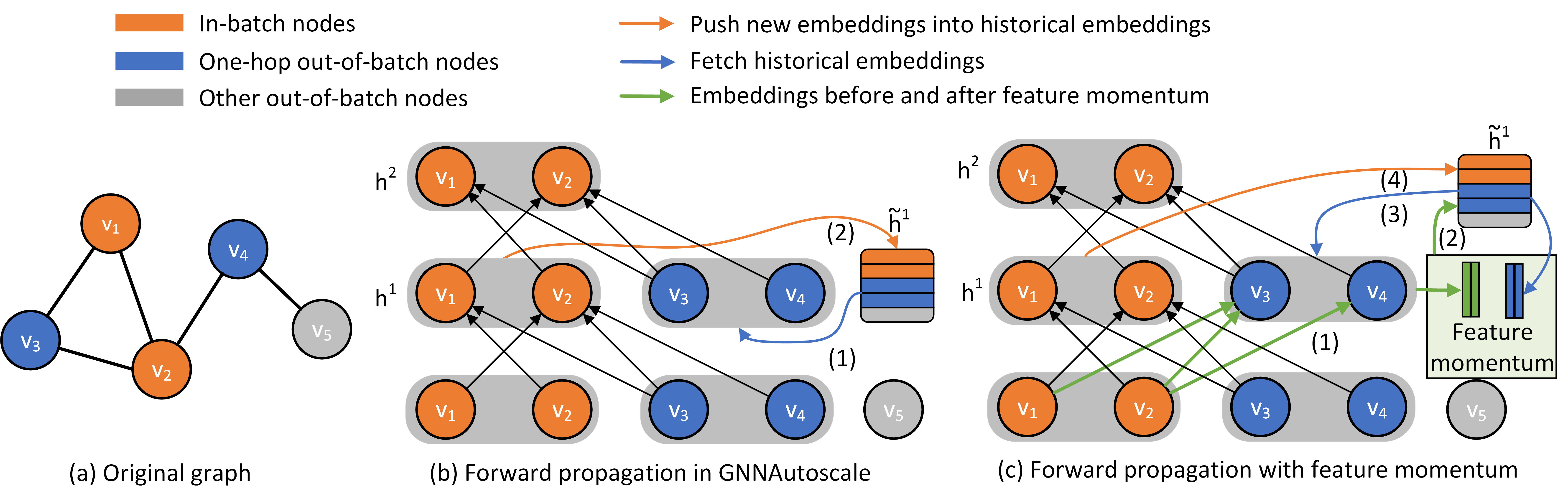}
    \caption{Comparsion of GrpahFM-OB with GNNAutoScale. (a) shows the original graph with in-batch nodes, one-hop out-of-batch nodes in blue and orange, respectively. (b) denotes the forward propagation in GNNAutoScale composed of two steps. The fist step fetches the historical embeddings for the one-hop out-of-batch nodes. Then it saves the in-batch nodes activation to their historical embeddings. Next, pseudo-full neighborhood propagation can be done to estimate the node embeddings of in-batch nodes in the next layer. (c) is the forward procedures in GraphFM-OB. It contains four steps. The first step is to calculate the message passing from the in-batch nodes to the out-batch nodes. Then we apply feature momentum to update the historical embeddings of the one-hop out-of-batch nodes, and save them into the corresponding historical embeddings. The third and last step is the same as the forward procedure in GNNAutoScale.}
    \label{fig:graphfm_ob_comp}
\end{figure*}

\paragraph{Convergence Analysis.} Next, we provide convergence analysis of GraphFM-IB. We show that GraphFM-IB converges to a stationary solution after a large number of iterations without using a large neighborhood size, which can effectively avoid  the neighbor explosion issue of existing node-wise sampling methods for large-scale GNN training. The detailed proof is provided in Appendix \ref{sec:graphfm_ib_provable}.

\begin{thm}\label{thm:main}
Under proper conditions, with $\eta=O(\epsilon^K), \beta_1=O(\epsilon^K), 0<\beta_2<1, \beta_{0,k}=O(\epsilon^{K-k}), T=O(\epsilon^{-(K+2)})$, GraphFM-IB ensures to find an $\epsilon$-stationary solution such that $\E[\|\nabla F(\w_\tau)\|]\leq \epsilon$ for a randomly selected $\tau\in\{1,\ldots, T\}$.  
\end{thm}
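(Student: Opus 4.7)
The plan is to track, at each level $k$ and iteration $t$, the mean-squared error $E_k^t := \mathbb{E}\|\tilde h_v^{k,t} - h_v^{k,*}(W^t)\|^2$ between the momentum estimator and the ``true'' aggregated representation that would be obtained by a full (non-sampled) forward pass at the current parameter $W^t$. The backbone of the proof is a standard moving-average identity applied to the recursion~(\ref{eqn:hu}): expanding $\tilde h_v^{k,t+1} - h_v^{k,*}(W^{t+1})$, adding and subtracting $(1-\beta_{0,k}) h_v^{k,*}(W^t)$, and using Young's inequality gives a one-step bound of the form
\begin{align*}
E_k^{t+1} \;\leq\; \Big(1-\tfrac{\beta_{0,k}}{2}\Big) E_k^t \;+\; O\big(\beta_{0,k}^2 \sigma^2\big) \;+\; \tfrac{C}{\beta_{0,k}}\big(E_{k-1}^t + \mathbb{E}\|W^{t+1}-W^t\|^2\big),
\end{align*}
where the sampling-noise term $\sigma^2$ uses the unbiasedness of $\hat{\mathcal{A}}$, the drift term $\|W^{t+1}-W^t\|^2$ is $O(\eta^2)$ for the Adam-style update, and the $E_{k-1}^t$ term arises from substituting $\hat h_u^{k-1,t} = \sigma(W^t_{k-1}\tilde h_u^{k-1,t})$ and using Lipschitzness of $\sigma$. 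I also need a per-coordinate (node-wise) version of this recursion because $\mathcal{D}_k$ only touches a random subset of nodes, which introduces the usual $1/|\mathcal{D}_k|$ factor but does not change the form.

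Next I would solve this recursion layer by layer. Treating the sample-path average $\bar E_k := \frac{1}{T}\sum_t E_k^t$, the inequality yields $\bar E_k \lesssim \beta_{0,k} \sigma^2 + \beta_{0,k}^{-2}(\bar E_{k-1} + \eta^2)$. Starting from $\bar E_0=0$ and plugging in $\beta_{0,k}=\Theta(\epsilon^{K-k})$, a quick induction gives $\bar E_k = O(\epsilon^{K-k})$ for each $k=1,\ldots,K$; in particular $\bar E_K = O(1)$ is weak but the lower layers are sharp, and this is exactly what the final gradient bound needs. From these feature errors I would then bound the bias/variance of the stochastic gradient estimator $G_t$, using the chain rule through the composition~(\ref{eqn:gnn}) and the Lipschitz/smoothness assumptions on each $\sigma(W_k\cdot)$, to get $\mathbb{E}\|G_t - \nabla F(W^t)\|^2 \lesssim \sum_k \bar E_k + \eta^2$.

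Finally, I would feed $G_t$ into a convergence analysis for the Adam-style update in lines 12--14. Here I plan to adapt the standard ``surrogate smooth descent'' argument for Adam: use the almost-sure boundedness of $\mathbf v_2^t$ (since $\beta_2$ is constant and $G_t$ is bounded in expectation) to show that $\eta/(\sqrt{\mathbf v_2^{t+1}}+\epsilon_0)$ behaves like an effective step size lying in a bounded range, then combine the $L$-smoothness descent lemma with the momentum tracking inequality $\mathbb{E}\|\mathbf v_1^{t+1} - \nabla F(W^t)\|^2 \leq (1-\beta_1)\mathbb{E}\|\mathbf v_1^t - \nabla F(W^{t-1})\|^2 + O(\beta_1^2) + O(\eta^2/\beta_1)$ plus the feature-estimation error. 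Summing over $t\in\{1,\ldots,T\}$, dividing by $T$, and optimizing with the stated choices $\eta=O(\epsilon^K)$, $\beta_1=O(\epsilon^K)$, $T=O(\epsilon^{-(K+2)})$ should produce $\min_{\tau\leq T} \mathbb{E}\|\nabla F(W^\tau)\| \leq \epsilon$.

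The main obstacle I anticipate is the last step: there is (as the authors themselves note) no off-the-shelf Adam analysis for multi-level stochastic compositional problems, so the coupling between the per-layer feature-momentum recursions, the first-moment tracking of $\mathbf v_1^t$, and the adaptive denominator $\sqrt{\mathbf v_2^{t+1}}+\epsilon_0$ has to be handled simultaneously via a carefully chosen Lyapunov function of the form $\Phi_t = F(W^t) + \sum_k c_k E_k^t + c_0 \mathbb{E}\|\mathbf v_1^t - \nabla F(W^{t-1})\|^2$, with the constants $c_k$ tuned so that all cross terms in $\Phi_{t+1}-\Phi_t$ cancel up to an $O(\eta \epsilon^2)$ residual. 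Getting the cascading $\epsilon^{K-k}$ scaling of $\beta_{0,k}$ to line up exactly with the contraction factors in each recursion is the delicate part; once that is done, summation and the parameter choices above close the proof.
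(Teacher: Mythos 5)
Your high-level architecture is the same as the paper's: reduce the Adam-style update to a descent lemma via two-sided bounds on $1/(\sqrt{\v^t}+\epsilon_0)$, run a $(1-\beta_1)$-tracking recursion for the first moment, control per-layer feature errors through moving-average recursions with cascaded $\beta_{0,k}$, and telescope everything. However, there is a genuine gap in the layer-error cascade, which is the heart of the argument. Your one-step recursion charges the lower-layer error $E_{k-1}^t$ with a coefficient $C/\beta_{0,k}$ and models the per-iteration drift at every layer by $\|W^{t+1}-W^t\|^2=O(\eta^2)$. But only the innermost layer's input moves at speed $\eta$; the inputs to layer $k\geq 2$ are the layer-$(k-1)$ momentum features, which move between iterations by a moving-average step of size $O(\beta_{0,k-1})$ (times noise plus the level-$(k-1)$ error). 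The paper exploits exactly this: in its recursion for $\Upsilon_k^t=\|\u_k^t-f_k(\u_{k-1}^t)\|^2$, the drift term $\|\u_{k-1}^{t+1}-\u_{k-1}^t\|^2$ is bounded by $\beta_{0,k-1}^2\big(\sigma_f^2+\Upsilon_{k-1}^t/n\big)$-type quantities, so after dividing by the contraction the cross-level transfer coefficient is $O(\beta_{0,k-1}^2/\beta_{0,k}^2)=O(\epsilon^2)$ under $\beta_{0,k-1}\approx\epsilon\,\beta_{0,k}$, and the cascade closes. With your coefficient $1/\beta_{0,k}$, the induction you claim does not follow from your own recursion: from $\bar E_k\lesssim\beta_{0,k}+\beta_{0,k}^{-2}(\bar E_{k-1}+\eta^2)$ with $\bar E_{k-1}=O(\epsilon^{K-k+1})$ and $\beta_{0,k}=\epsilon^{K-k}$ you get a term of order $\epsilon^{1-(K-k)}$, which is $\Omega(1)$ already at $k=K-1$ and diverges for smaller $k$, so $\bar E_k=O(\epsilon^{K-k})$ is not established.

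A second, related problem is your remark that ``$\bar E_K=O(1)$ is weak but \ldots exactly what the final gradient bound needs.'' It is the opposite: in your own bound $\E\|G_t-\nabla F(W^t)\|^2\lesssim\sum_k\bar E_k+\eta^2$, each layer's error enters with an $O(1)$ weight (a product of Lipschitz constants, as in the paper's constants $C_k$), so an $O(1)$ time-averaged error at the top layer makes the averaged gradient-tracking error $O(1)$ and the final stationarity bound cannot reach $\epsilon$. Every layer's time-averaged error must be driven to the $\epsilon^2$ scale; since each moving average has a noise floor of order $\beta_{0,k}\sigma_f^2$ (after telescoping against the $\beta_{0,k}$-contraction), this is precisely why the drift must be charged at rate $\beta_{0,k-1}$ rather than $\eta$ or $1/\beta_{0,k}$ — otherwise the required smallness of the upper-layer $\beta$'s and the required largeness of the lower-layer $\beta$'s (needed so that $\eta^2/\beta_{0,1}^2$ stays bounded and can be absorbed into the $-\|\m^t\|^2$ term of the descent lemma) cannot be reconciled. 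Fixing the recursion as above, and then following your Lyapunov plan (which matches the paper's combination of the descent lemma, the $\Phi^t=\|\nabla F(\w^t)-\m^t\|^2$ recursion, and the layer-wise $\Upsilon_k^t$ recursions), recovers the paper's argument; one further minor point is that the two-sided bound on the adaptive denominator is obtained from almost-sure bounds on the stochastic Jacobians, not merely bounds in expectation.
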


\subsection{Feature Momentum for Out-of-Batch Nodes}

\begin{table*}[t]
\caption{{Statistics and properties of the datasets. The ``m'' denotes the multi-label classification task, and ``s'' denotes single label classification task.}}
\vspace{-5pt}
\addtolength{\tabcolsep}{-4pt}
\centering
\begin{tabular}{l|cccccc}
\toprule
Dataset         & \# of nodes   &  \# of edges  & Avg. degree  & \# of features  & \# of classes & Train/Val/Test    \\
\midrule 
Flickr          & 89,250      &    899,756    & 10.0813    & 500  &  7(s) & 0.500/0.250/0.250   \\
Yelp            & 716,847     &   6,997,410   &  9.7614    & 300  & 50(m) & 0.750/0.150/0.100  \\
Reddit          & 232,965     &   11,606,919  & 49.8226    & 602  & 41(s) & 0.660/0.100/0.240   \\
ogbn-arxiv      & 169,343     &   1,166,243   & 6.8869     & 128  & 40(s) & 0.537/0.176/0.287\\
ogbn-products   & 2,449,029   &   61,859,140  & 25.2586    & 100  & 47(s) & 0.100/0.020/0.880 \\
\midrule
\end{tabular}
\label{table:dataset statistics}
\end{table*}

\begin{figure*}[t]
     \centering
     \subfloat[Comparison between GraphSAGE and graphFM-IB + SAGE on Reddit.]
     {\includegraphics[width=0.3\textwidth]{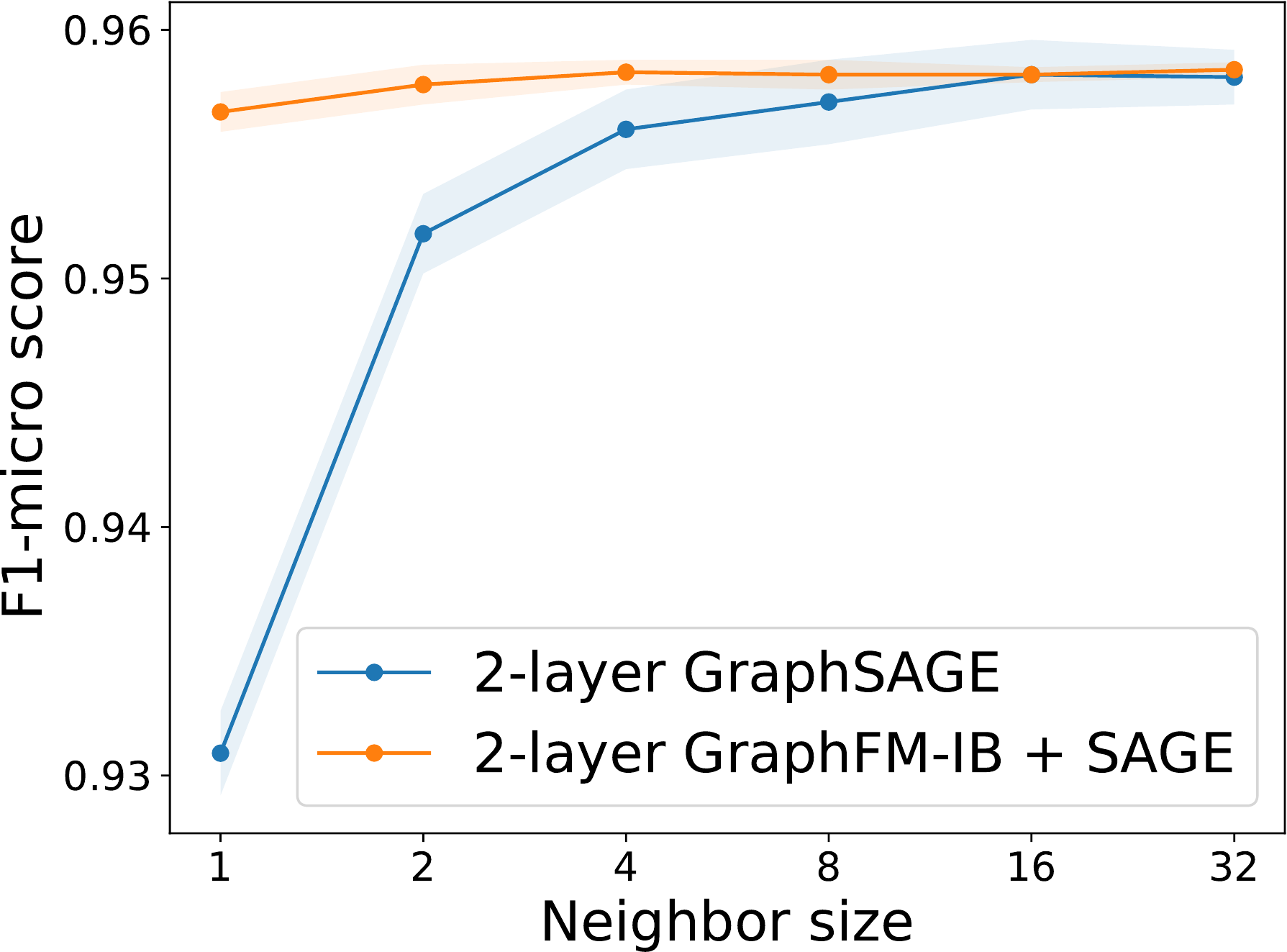}\label{fig:reddit}}
     \qquad
     \subfloat[Comparison between GraphSAGE and graphFM-IB + SAGE on Flickr.]
     {\includegraphics[width=0.3\textwidth]{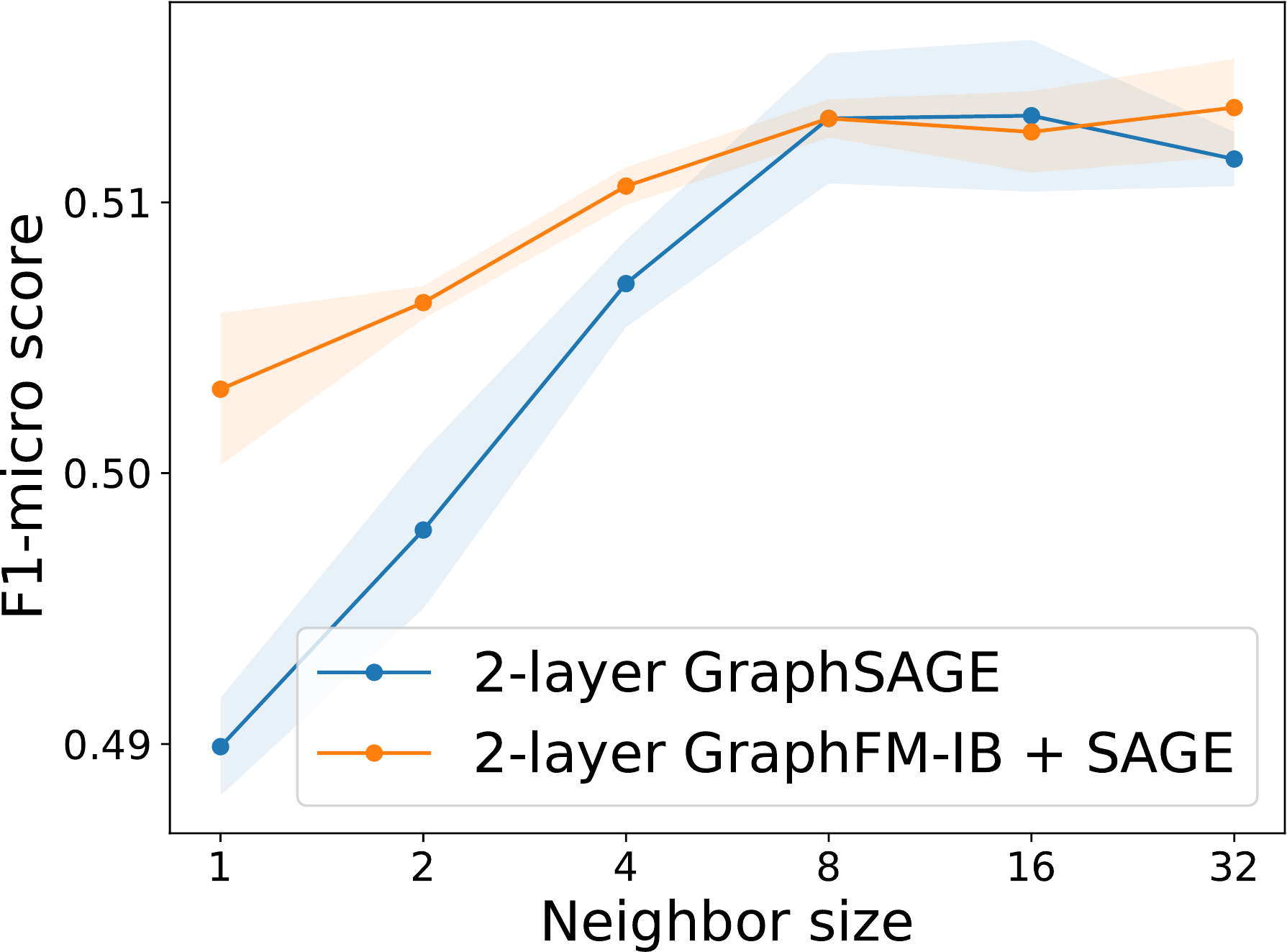}\label{fig:flickr}}
     \qquad
     \subfloat[Results for graphFM-IB + SAGE with different number of layers and neighbor sizes on Reddit.]
     {\includegraphics[width=0.305\textwidth]{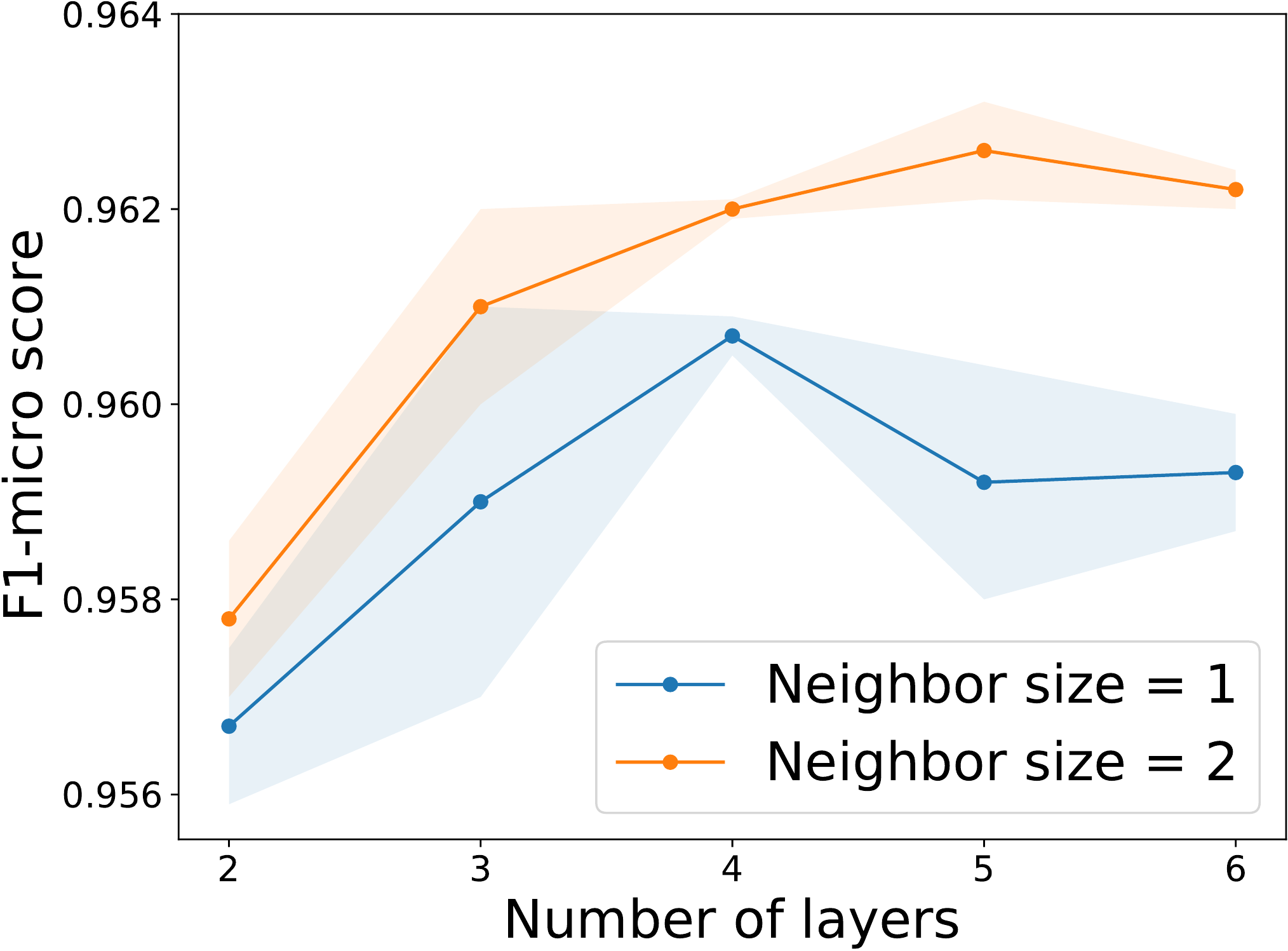}\label{fig:reddit_layer}}
    \caption{Illustration of the difference between GraphSAGE and graphFM-IB + SAGE and the performance of graphFM-IB + SAGE with different number of layers. 
    Neighbor size denotes the sampled neighbor size for each node at every layer.}
    \label{fig:ib}
\end{figure*}

In this subsection, we present a method based on GNNAutoScale by applying Feature Momentum to out-of-batch nodes. To this end, we first describe GNNAutoScale~\cite{Fey/etal/2021}.  It is based on pseudo-full neighborhood estimation that uses all neighbors for computing the new feature representation of an in-batch node. However, not all neighbors have updated their feature representations in the previous layer due to that some are not in the sampled batch. To address this issue, historical embeddings of those out-of-batch nodes are used. Let $\mathcal D_t$ denote the in-batch nodes. Then for each node $v\in\mathcal D_t$, we denote its neighborhood  including itself by $\overline{\mathcal N_v}=\mathcal N_v\cup \{v\}$. We can decompose it into two subsets, $\mathcal S^t_v=\overline{\mathcal N_v}\cap \mathcal D_t$ and $\mathcal O^t_v=\overline{\mathcal N_v}\setminus \mathcal S^t_v$. In GNNAutoScale, the $k$-th layer embedding of node $v\in\mathcal D_t$ is computed as:
\begin{align}\nonumber
    &\hat h^{k,t}_v = W_k^t\cdot\mathcal A(\{h^{k-1,t}_u: u\in\mathcal S^t_v\}\cup\{\tilde{h}^{k-1, t}_u: u\in\mathcal O^t_v\})\\\label{eqn:gnnas}
    &h^{k,t}_v=\sigma(\hat h^{k,t}_v)
\end{align}
where $\tilde{h}^{k-1,t}_u$ denotes the fetched historical embedding of the out-batch node $u\in\O_v^t$. We denote by $\tau_u$ ($\tau_u<t$)  the last iteration before $t$ that $u$ is sampled, i.e., $u\in\mathcal D_{\tau_u}$. Due to the update rule of GNNAutoscale, it is worth noting that the historical embedding $\tilde{h}_u^{k-1}$ of $u$ in layer $k-1$ is not updated between iteration $\tau_u$ and $t$, i.e., $\tilde{h}_u^{k-1,t}= \dotsc = \tilde{h}_u^{k-1,\tau_u} = h_u^{k-1,\tau_u}$. Depending on the batch size and the size of the graph, the last updated feature representation of node $u$ could happen a large number of iterations ago $\tau_u\ll t$, hence $\tilde{h}_u^{k-1,t} = h^{k-1,\tau_u}_u$ might have a large estimation error compared to $h_u^{k-1,t}$. In practice, cluster-based sampling has been shown to be helpful to reduce $t-\tau_u$~\citep{Fey/etal/2021}.  
	
	To further mitigate this issue, we notice that for those in out-of-batch (at layer $k$) such that their 1-hop neighbors are in the sampled batch (at layer $k-1$), we can use their in-batch neighbors to update their historical embeddings using moving average similar to that in~(\ref{eqn:FM}). 
	Then we update the historical embedding $\tilde{h}^{k, t}_v$ for $v\notin\mathcal D_t$ by 
	\begin{equation}\label{eqn:th}
		\begin{aligned}
			\tilde{h}^{k,t}_v &=(1-\beta_{0,k})\tilde{h}^{k, t-1}_v \\
			&\quad+ \beta_{0,k} \sigma(W_k^t\cdot \A(\{h_u^{k-1,t}:u\in\S_v^t\})).
		\end{aligned}
	\end{equation}
	Indeed, we do not need to update $\tilde{h}^{k,t}_v$ for all $v\not\in\mathcal D_t$. We only need to consider those such that $\S_v^t\neq\emptyset$, $v\not\in\mathcal D_t$.
	
	Then, we can update $h^{k,t}_v$ for $v\in\mathcal D_t$ by 
	\begin{equation}\label{eqn:fh}
		\begin{aligned}
			&\hat h^{k,t}_v = \mathcal A(\{h^{k-1,t}_u: u\in\mathcal S^t_v\}\cup\{\tilde h^{k-1, t}_u: u\in\mathcal O^t_v\})\\
			&h^{k,t}_v=\sigma(W_k^t\cdot \hat h^{k,t}_v).
		\end{aligned}
	\end{equation}
	We present a formal description of the proposed method GraphFM-OB in Algorithm~\ref{alg:2}. A computational graph of GraphFM-OB compred with GNNAutoScale is shown in Figure~\ref{fig:graphfm_ob_comp}.
	Some theoretical insight showing that the updated $h^{k,t}_v$ in~(\ref{eqn:fh}) by GraphFM-OB could lead to smaller estimation error than the updated $h^{k,t}_v$ in~(\ref{eqn:gnnas}) by GNNAutoScale can be found in Appendix~\ref{sec:graphfm_ob_insight}. 


\section{Experiments}

\begin{table*}[t]
\caption{Comparison between our GraphFM-IB, GraphFM-OB and other baseline methods. The reported results of GraphFM-IB have averaged over 5 random runs. The experiments of GraphFM-OB follow the setting of GNNAutoScale to report the F1-micro scores with fixed random seeds for a fair comparison. The top performance scores are highlighted in \textbf{bold}. \underline{Underline} indicates that our methods achieve better performance compared to the corresponding baselines without feature momentum.}
\centering
\begin{tabular}{ll|ccccc}
\toprule 
Backbones & Methods  & Flickr   & Reddit  & Yelp  & ogbn-arxiv   & ogbn-products   \\
\midrule 
  & VR-GCN         & 0.482 ± 0.003    &   0.964 ± 0.001      & 0.640 ± 0.002    & --      &  --           \\
  & FastGCN        & 0.504 ± 0.001    &   0.924 ± 0.001      & 0.265 ± 0.053    & --      &  --          \\
  & GraphSAINT     & 0.511 ± 0.001    &   0.966 ± 0.001      & 0.653 ± 0.003    & --      &  0.791 ± 0.002   \\
  & Cluster-GCN    & 0.481 ± 0.005    &   0.954 ± 0.001      & 0.609 ± 0.005    & --      &  0.790 ± 0.003   \\
  & SIGN           & 0.514 ± 0.001    &   0.968 ± 0.000      & 0.631 ± 0.003    & 0.720 ± 0.001      &  0.776 ± 0.001   \\
\midrule
 \multirow{2}{*}{SAGE}  & GraphSAGE      & 0.501 ± 0.013    &   0.953 ± 0.001      & 0.634 ± 0.006    & 0.715 ± 0.003      & 0.783 ± 0.002 \\
 & GraphFM-IB  & 0\underline{.513 ± 0.009}    & \underline{0.963 ± 0.005}        & \underline{0.641 ± 0.001}    & 0.713 ± 0.002      & \underline{0.792 ± 0.003} \\
 \midrule
 \multirow{2}{*}{GCN} & GNNAutoScale         & 0.5400     & 0.9545      &  0.6294      & 0.7168    & 0.7666       \\
 & GraphFM-OB   & \underline{0.5446}	 &  0.9540	   &   --         & \underline{0.7181}	   & \underline{0.7688}      \\
  \midrule
 \multirow{2}{*}{GCNII} & GNNAutoScale       & 0.5620     & 0.9677      & 0.6514        & 0.7300     & 0.7724  \\
 & GraphFM-OB      & \underline{0.5631}     & \underline{0.9680}      & \textbf{\underline{0.6529}}       & \textbf{\underline{0.7310}}     & \underline{0.7742}         \\
  \midrule
 \multirow{2}{*}{PNA} & GNNAutoScale         & 0.5667     & \textbf{0.9717}      & 0.6440        & 0.7250     & 0.7991    \\
  & GraphFM-OB   & \textbf{\underline{0.5710}} & 0.9712 & \underline{0.6450}  & \underline{0.7290}   & \textbf{\underline{0.8047}}         \\
\bottomrule
\end{tabular}
\label{table:overall performance}
\end{table*}

\textbf{Datasets.}  We evaluate our proposed algorithms GraphFM-IB and GraphFM-OB with extensive experiments on the node classification task on five large-scale graphs, including Flickr \cite{zeng2019graphsaint}, Yelp \cite{zeng2019graphsaint}, Reddit \cite{hamilton2017inductive}, ogbn-arxiv \cite{hu2021ogblsc} and ogbn-products \cite{hu2021ogblsc}. They contains thousands or millions of nodes and edges, and we summarize the statistics of these datasets in Table \ref{table:dataset statistics}. The detailed task description of these datasets can be found in Appendix \ref{App:datasets}.

\textbf{Baselines.} We compare with the following five baselines: 1. VR-GCN \cite{chen2018stochastic}, 2. FastGCN \cite{chen2018fastgcn}, 3. GraphSAINT \cite{zeng2019graphsaint}, 4. ClusterGCN \cite{chiang2019cluster}, 5. SIGN \cite{frasca2020sign}. They cover different categories of efficient algorithms on large-scale graph training, including node-wise, layer-wise, graph sampling and precomptuing methods. Comparison and details about these baselines are provided in Appendix \ref{App:baselines}.

\textbf{Software and Hardware.} The implementation of our methods is based on the PyTorch \cite{paszke2019pytorch}, and Pytorch\_geometric \cite{Fey/Lenssen/2019}. Our code is implemented in the DIG (Dive into Graphs) library \cite{JMLR:v22:21-0343}, which is a turnkey library for graph deep learning research and publicly available\footnote{\url{https://github.com/divelab/DIG/tree/dig/dig/lsgraph}}. In addition, we conduct our experiments on Nvidia GeForce RTX 2080 with 11GB memory, and Intel Xeon Gold 6248 CPU. 

\subsection{Feature Momentum for In-Batch Nodes}


\begin{table*}[t]
    \begin{center}
        \caption{Comparison between with and without feature momentum of GraphSAGE in terms of \textbf{model performance, GPU memory consumption and running time per epoch} on Reddit and Flickr. Neighbor sizes are the list of number of neighbors sampled in each layer and the authors of GraphSAGE use the sampled neighbor sizes as 25 and 10.}
    \label{table:memory_time}
    \begin{tabular}{l|l|cc}\toprule
            Methods & Neighbor sizes  & Reddit & Flickr \\
            \midrule
            GraphSAGE & 2 layer full-batch  & OOM & 0.513/4,860M/1.7s \\
            GraphSAGE & [25,10]  & 0.957/3,080M/6.5s & 0.512/1,740M/1.6s \\
            GraphSAGE & [1,1]  & 0.931/2,250M/3.3s & 0.490/1,310M/1.2s \\
            GraphFM-IB + SAGE  & [1,1] & 0.957/2,300M/3.9s & 0.503/1,480M/1.4s  \\
            GraphSAGE & [4,4] & 0.955/2,320M/4.0s & 0.507/1,390M/1.3s  \\
            GraphFM-IB + SAGE  & [4,4] & 0.958/2,450M/4.2s & 0.511/1,540M/1.5s  \\
            \midrule
            GraphSAGE & 4 layer full-batch & OOM & 0.514/11,000M/5.2s \\
            GraphSAGE & [25,10,10,10] & 0.962/10,110M/53s & 0.514/6,480M/3.6s \\
            GraphSAGE & [1,1,1,1] & 0.951/2,700M/5.2s & 0.502/1,360M/1.7s \\
            GraphFM-IB + SAGE & [1,1,1,1]  & 0.962/2,860M/6.2s & 0.513/1,700M/2.0s \\
            GraphSAGE & [2,2,2,2]  & 0.958/2,870M/5.8s & 0.509/1,470M/1.8s \\
            GraphFM-IB + SAGE & [2,2,2,2]  & 0.963/3,130M/7.5s & 0.513/1,900M/2.4s \\
            \bottomrule
            \end{tabular}
\end{center}
\end{table*}


\textbf{Setup.} We first apply our proposed GraphFM-IB algorithm in the framework of GraphSAGE~\cite{hamilton2017inductive} and conduct experiments on five large-scale graph datasets.
GraphSAGE is designed for large-scale graph learning and provides a general framework with neighbor sampling and aggregation. 
The aggregation is implemented in a graph convolution layer, called SAGEConv.
The main idea of the neighbor sampling is to sample neighbors for each node to avoid the memory issue caused by considering full neighbors.
In practice, the authors use a fixed-size, uniform sampling function to sample neighbors for each node in each layer.
The fixed neighbor sizes can be different among layers 
and the authors use a 2-layer GraphSAGE with sampled neighbor sizes as 25 and 10.
In the following part, we use `GraphFM-IB + SAGE' to denote our GraphFM-IB method with SAGEConv.
We explore the feature momentum hyper-parameter $\beta$ in the range from 0.1 to 0.9.

\textbf{Results.}
We start by studying the effects of sampled neighbor size on the performance of GraphSAGE and GraphFM-IB + SAGE, 
and show that our proposed GraphFM-IB can achieve competitive performance with smaller sampled neighbor sizes.
As shown in  Fig.~\ref{fig:ib}\subref{fig:reddit} and Fig.~\ref{fig:ib}\subref{fig:flickr}, the performance for 2-layer GraphSAGE is highly related to the sampled neighbor size. It requires to sample at least 8 neighbors per node to guarantee good performance.
In contrast, GraphFM-IB + SAGE achieves good results with only 1 sampled neighbor.
Thus we can reduce the sampled neighbor size while obtaining competitive performance by using feature momentum.


In the next step, we explore GNNs with more layers. Since GrpahFM-IB can perform well with small sampled neighbor sizes, we focus on the cases where the neighbor sizes are 1 or 2.
As shown in Fig.~\ref{fig:ib}\subref{fig:reddit_layer}, 
models with more layers outperform the 2-layer one. 
Note that using more than 5 layers may hurt the performance since deep model may need carefully designed architecture and training strategies, which is another research problem~\citep{liu2020towards, li2019deepgcns}.


To further evaluate the memory and time efficiency of the proposed GraphFM-IB, we compare GraphSAGE and GraphFM-IB with various neighbor sizes in Table \ref{table:memory_time}.
We can observe that GraphFM-IB saves a lot of GPU memory and training time while achieving similar performance as GraphSAGE, especially when the GNNs have many layers.
For example, to achieve similar performance with four-layer SAGE on Reddit, GraphSAGE with neighbor sizes $[25, 10, 10, 10]$ costs $10,100$M and 53 seconds per epoch, while GraphFM-IB with neighbor sizes $[1, 1, 1, 1]$ costs $2,860$M and 6.2 seconds per epoch. 
Note that $[1, 1, 1, 1]$ denotes sampling one neighbor at each layer in a 4-layer GNN.
In this case, GraphFM-IB only samples one neighbor, thus alleviating the neighborhood explosion problem.
Besides, the incremental GPU memory  mainly caused by historical embeddings is acceptable. 
On Reddit, the incorporation of historical embedding costs 50M for two-layer SAGE and 160M for four-layer SAGE with neighbor sizes $[1, 1, 1, 1]$.

Finally, we explore the sampled neighbor size in the set \{1,2,4,8\} and the number of layers from 2 to 4.
The testing F1-micro scores for GraphSAGE and the model using GraphFM-IB are summarized in Table \ref{table:overall performance}. The results for GraphSAGE are taken from the referred papers and the OGB leaderboards.
The results show that using our GraphFM-IB can consistently outperform GraphSAGE on all datasets, indicating that feature momentum is helpful for better feature estimation and improves performance for large graphs. 
Note that GraphSAGE with GraphFM-IB achieves almost the same results as GraphSAGE on ogb-arxiv since the result of GraphSAGE is full-batch training with the GraphSAGE convolution layer provided by the OGB team. 
We list the number here to keep consistent with the value in OGB leaderboards and the GraphSAGE result with neighbor sizes as 25 and 10 is 0.704 ± 0.001, which is still worse than GraphFM-IB + SAGE.

\subsection{Feature Momentum for Out-Batch Nodes}

\textbf{Setup.} To evaluate our proposed GraphFM-OB algorithm, we apply it to three widely used GNN backbones, including \textbf{GCN}~\cite{kipf2017semi},~\textbf{GCNII} \cite{chen2020simple} and \textbf{PNA}~\cite{corso2020pna}. Then we conduct experiments to evaluate the obatined models on the five large-scale graphs.  
We explore the feature momentum hyper-parameter $\beta$ in the range from 0.1 to 0.9. We select the learning rate from \{0.01, 0.05, 0.001\} and dropout from  \{0.0, 0.1, 0.3, 0.5\}. Due to the over-fitting problem on the ogbn-products dataset, we set the edge drop \cite{rong2019dropedge} ratio at 0.8 during training for this particular dataset.

\textbf{Results.} 
The testing F1-micro scores are shown in Table \ref{table:overall performance}. 

It can be observed that GraphFM-OB methods with different GNN backbones outperform corresponding baselines on four datasets. In addtion, GraphFM-OB shows enhancement performance with GCNII on all five datasets. With PNA as backbone, GraphFM-OB achieves the state-of-the-art performance on the Flickr dataset. These results demonstrate that GraphFM-OB, with using feature momentum to alleviate the staleness of historical embeddings, can obtain more accurate node embedding estimation.

As an exception, the implementation of GNNAutoScale and GraphFM-OB on Yelp with GCN perform much worse than the reported score 0.6294. Therefore, we omit the this result in Table \ref{table:overall performance}. 


\textbf{Measuring the staleness of historical embeddings.}
In order to measure the staleness of the historical embeddings, we propose a new metric called staleness score. Intuitively, if the historical embeddings of nodes are bright new without any staleness, they are exact the same as the full-neighborhood propagation embeddings.
The staleness of the historical embeddings is the reason for the biasedness estimation of the pseudo full-neighborhood embeddings.
Here we define the staleness score of node $v$ at $k$-th layer as Euclidean distance of its historical embedding at layer $k$ and full-neighborhood propagation embedding at $k$-th layer. Formally,
\begin{equation}
    S^k(v) = \| \bar{h}_v^k - \tilde{h}_v^k \|,
\end{equation}
where $\bar{h}_v^k$ is the full-neighborhood propagation embedding of node $v$ at $k$-th layer.

Based on the staleness score of node $v$ at layer $k$, we further propose the staleness score for the layer $k$ averaged over all the nodes as
\begin{equation}
    S^k = \frac{1}{|V|} \sum_{v \in \mathcal{V}} S^k(v)
\end{equation}
Thus, the $S^k$ can reflect the staleness of historical embeddings of all the node at layer $k$. In this way, we can compare different methods with such metric to evaluate the staleness of their historical embeddings.

We select the PNA as the backbone with 3, 4, 3 layers on Flickr, ogbn-arxiv and Yelp, respectively. We split these datasets into 24, 40 and 40 clusters respectively, and the batch size is set to 1. For GraphFM-OB, we select feature momentum hyper-parameter $\beta$ as 0.5, 0.3 and 0.3 for these three datasets respectively. We calculate the staleness scores for the historical embeddings after the epoch that achieves the best evaluation result.

The staleness scores are shown in Table \ref{table:graphfm_ob_staleness}. We can obviously find that the GraphFM-OB achieves smaller staleness scores than GNNAutoScale in most cases, indicating that GraphFM-OB can alleviate the staleness problem.

\begin{table}
	\caption{The staleness scores of the historical embeddings}
	\label{table:graphfm_ob_staleness}
	\centering
	\begin{tabular}{lccc}\toprule
        Datasets & Layer & GNNAutoScale & GraphFM-OB \\
        \hline
        \multirow{2}{*}{Flickr} & 1 & 3.8929  &  \textbf{3.2046} \\
                                & 2 & 3.2185  &  \textbf{2.3873} \\
        \hline
        \multirow{3}{*}{ogbn-arxiv} & 1  &  8.2709 &   \textbf{5.7088} \\
                                    & 2  & 12.5646 &  \textbf{12.0062} \\
                                    & 3  &  2.0200 &   \textbf{1.4884} \\
        \hline
       \multirow{2}{*}{Yelp}        & 1  &  \textbf{3.0186} & 3.2484 \\
                                    & 2  & 4.4013 &  \textbf{3.8328}  \\
        \bottomrule
    \end{tabular}
\end{table}


\section{Conclusion}
To obtain accurate hidden node representations, we propose feature momentum (FM) to incorporate historical embeddings in an Adam-update style. Based on FM, we develop two algorithms, GraphFM-IB and GraphFM-OB, with convergence guarantee and some  theoretical insight, respectively.
Extensive experiments demonstrate that our proposed methods can effectively 
alleviate the neighborhood explosion and the staleness problems, while achieving promising results.

\section*{Acknowledgments}

This work was supported in part by National Science Foundation grant IIS-1908198
and TRIPODS grant CCF-1934904 
to Texas A\&M University, and 2110545 and Career Award 1844403 to University of Iowa.


\bibliographystyle{icml2022}

\bibliography{refs}

\clearpage

\appendix
\onecolumn

\part*{Appendix}

\section{Experiment Settings}

\subsection{Dataset Descriptions} \label{App:datasets}
We compare the results on the following five large-scale datasets. 
They cover many real world tasks, which are summarized as follows.
1. classifying the image tags with image description and edges to images with same properties. (Flickr)
2. classifying user types with their reviews and edges to their friends. (Yelp)
3. classifying the community of the posts with the posts content and edges to the posts that have been commented by the same customer. (Reddit)
4. classifying the papers with the abstract average embedding features and edges to their citation papers. (ogbn-arxiv)
5. classifying categories of products on Amazon with the product description and edges to other products that are purchased together. (ogbn-products)
We follow the official split of these datasets to conduct our experiments.

\subsection{Baseline Descriptions} \label{App:baselines}
We compare the results with five different baselines, including node-wise, layer-wise, subgraph sampling and precomputing methods. We summarize the baselines as below.
\textbf{VR-GCN} \cite{chen2018stochastic} is a node-wise sampling method. Compared to the GraphSAGE, it combines the historical embedding of one-hop neighbors and the estimated embedding of sampled nodes to reduce the variance of the unbiased estimated target node embedding.
\textbf{FastGCN} \cite{chen2018fastgcn} is a layer-wise sampling method. For each layer, it will sample fixed number of nodes from the neighborhood union of target nodes in the next layer with importance sampling to reduce the variance of unbiased estimated target node embeddings. Since it will sample fixed number of nodes in each layer, the total number of sampled nodes will grow linearly. However, for node-wise sampling method, each node will sample fixed number of neighbors from the previous layer. Thus, it will lead to an exponentially growth of sampled nodes.
\textbf{ClusterGCN} \cite{chiang2019cluster} and \textbf{GraphSAINT} \cite{zeng2019graphsaint} are subgraph-sampling methods. ClusterGCN will combine fixed number of clusters to form the subgraphs to train the model. GraphSAINT will take sampled edges or random walks to form subgraphs. Compared to the layer-wise sampling method, once the subgraph is provided, the nodes in each layer are the same as the input subgraph. Therefore, the sampling procedure can be done once before training, which can alleviate the overhead of sampling.
\textbf{SIGN} \cite{frasca2020sign} is a precomputing method. It computes the L-hop aggregated features on the raw input features first, then directly feed them into MLPs to predict the labels. Because the precomputing procedure doesn't contain any parameter, the training of each node is independent.

\section{Theoretical Insight of GraphFM-OB}\label{sec:graphfm_ob_insight}

	Consider one layer of the graph neural network. The $\underline{h_v^{k,t}}$ representation of a node $v$ in the last layer by full-neighbordhood forward propagation can be written as:
	\begin{align*}
		& \underline{h_v^{k,t}} = \sigma(\underline{\hat{h}_v^{k,t}}),\quad \underline{\hat{h}_v^{k,t}} = W_k^t\cdot \A(\{h_u^{k-1,t}:u\in\overline{\N}_v\}).
	\end{align*}
	Instead of full-neighborhood propagation on every nodes, the GNNAutoScale algorithm samples a batch of nodes $\mathcal{D}_t$ and approximate $\underline{h_v^{k,t}}$ by $h_v^{k,t}$, which is computed by incorporating the current embedding $h_u^{k-1,t}$ of in-batch node $u\in S_v^t$ and the historical embedding $\tilde{h}_u^{k-1,t}$ of out-batch node $u\in\O_v^t$ as follows
	\begin{align*}
		& h_v^{k,t} = \sigma(\hat{h}_v^{k,t}),\quad \hat{h}_v^{k,t} = W_k^t\cdot \A (\{h_u^{k-1,t}:u\in\S_v^t\}\cup \{\tilde{h}_u^{k-1,t}:u\in\O_v^t\}).
	\end{align*}
	Note that $\tilde{h}_u^{k-1,t} = h_u^{k-1,\tau_u}$, and the estimation error $\Norm{\underline{h_v^{k,t}} - h_v^{k,t}}$ depends on the ``staleness'' of the historical embedding $h_u^{k-1,\tau_u}$ compared to the real embedding $h_u^{k-1,t}$, where $u\in\O_v^t$ is an out-batch node and $\tau_u$ denotes the last iteration that node $u\in\O_v^t$ is sampled, i.e. $u\in\mathcal{D}_{\tau_u}$.  Proposition~\ref{prop:GAS} provides an upper bound of the approximation error of one node $v$ by using GNNAutoScale under some conditions.
	
	\begin{prop}\label{prop:GAS}
		Assume that the activation function $\sigma(\cdot)$ is $C_\sigma$-Lipschitz continuous and the aggregator $\A(\cdot)$ is $C_\A$-Lipschitz continuous, the weight $W_k^t$ is bounded as $\Norm{W_k^t}\leq C_k$. Then, the estimation error $\Norm{h_v^{k,t} - \underline{h_v^{k,t}}}$ of GNNAutoScale is upper bounded by its ``staleness''. Formally, 
		\begin{align*}
			\Norm{h_v^{k,t} - \underline{h_v^{k,t}}} &\leq \underbrace{C_\sigma C_k C_\A\sum_{u\in\O_v^{k,t}}\Norm{h_u^{k-1,\tau_u} - h_u^{k-1,t}}}_{\text{\ding{164}}},
		\end{align*}
		where $\tau_u$ denotes the last iteration that node $u\in\O_v^{k,t}$ is sampled, i.e. $u\in\mathcal{D}_{\tau_u}$.
	\end{prop}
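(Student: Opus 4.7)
The plan is to peel off the three nonlinearities in sequence (activation, linear map, aggregator) and reduce the problem to a sum of coordinate-wise discrepancies between the historical and current embeddings of out-of-batch nodes. Since the right-hand side of the bound involves precisely $\sum_{u\in\O_v^{k,t}} \|h_u^{k-1,\tau_u} - h_u^{k-1,t}\|$ with the product $C_\sigma C_k C_\A$ out front, the structure of the proof is dictated by the order of these three Lipschitz constants.

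First, I would apply $C_\sigma$-Lipschitz continuity of $\sigma$ to pass from the post-activation quantities to the pre-activation quantities, yielding $\|h_v^{k,t}-\underline{h_v^{k,t}}\| \le C_\sigma \|\hat h_v^{k,t}-\underline{\hat h_v^{k,t}}\|$. Next, I would use the bound $\|W_k^t\|\le C_k$ (operator norm compatible with the vector norm in use) to pull the weight matrix out, giving $\|\hat h_v^{k,t}-\underline{\hat h_v^{k,t}}\| \le C_k \|\A(S_{\text{GAS}})-\A(S_{\text{full}})\|$, where $S_{\text{GAS}}=\{h_u^{k-1,t}:u\in\S_v^t\}\cup\{\tilde h_u^{k-1,t}:u\in\O_v^t\}$ and $S_{\text{full}}=\{h_u^{k-1,t}:u\in\overline{\N}_v\}$. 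The two multisets agree on in-batch indices $\S_v^t$ and differ exactly on the out-of-batch indices $\O_v^t$, where $\tilde h_u^{k-1,t}$ replaces $h_u^{k-1,t}$.

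Then I would invoke $C_\A$-Lipschitz continuity of the aggregator to bound $\|\A(S_{\text{GAS}})-\A(S_{\text{full}})\|$ by the aggregated discrepancy across the differing coordinates, which in the simplest reading yields $\sum_{u\in\O_v^{k,t}} \|\tilde h_u^{k-1,t}-h_u^{k-1,t}\|$. Finally, I would substitute the identity $\tilde h_u^{k-1,t}=h_u^{k-1,\tau_u}$, which follows from the GNNAutoScale update rule (the historical embedding of $u$ is not refreshed between iterations $\tau_u$ and $t$, as already established in the main text), to obtain the desired bound with $C_\sigma C_k C_\A$ out front.

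The only non-mechanical point is clarifying what \emph{$C_\A$-Lipschitz} means for a set-valued aggregator; the natural convention is coordinate-wise, so that changing a single entry by $\delta$ perturbs $\A$ by at most $C_\A\|\delta\|$, after which summing over $u\in\O_v^{k,t}$ by the triangle inequality delivers the stated bound. (For a uniform mean aggregator with $|\overline{\N}_v|+1$ entries one could even extract an extra factor $1/(|\overline{\N}_v|+1)$, but the statement in the proposition absorbs this into $C_\A$.) This is the one spot where some care is needed, but it is a bookkeeping issue rather than a genuine obstacle; the rest of the argument is a direct chain of Lipschitz inequalities and requires no further structural assumptions beyond those already listed.
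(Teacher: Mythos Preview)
Your proposal is correct and mirrors the paper's own proof almost line for line: the paper also applies, in order, the $C_\sigma$-Lipschitz bound on $\sigma$, the operator-norm bound $\|W_k^t\|\le C_k$, and then the $C_\A$-Lipschitz property of $\A$ on the two multisets that differ only on $\O_v^t$, after identifying $\tilde h_u^{k-1,t}=h_u^{k-1,\tau_u}$. Your added remark clarifying the coordinate-wise interpretation of the aggregator's Lipschitz constant is a helpful gloss that the paper leaves implicit.
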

	\begin{proof}
		Based on the definition, the estimation error of $h_v^{k,t}$ compared to $\underline{h_v^{2,t}}$ can be upper bounded as
		\begin{align*}
			&\Norm{h_v^{k,t} - \underline{h_v^{k,t}}} \leq C_\sigma \Norm{W_k^t\cdot \left(\A (\{h_u^{k-1,t}:u\in\S_v^t\}\cup \{h_u^{k-1,\tau_u}:u\in\O_v^t\}) - \A(\{h_u^{k-1,t}:u\in\overline{\N}_v\})\right) }\\
			& \leq C_\sigma C_k \Norm{\A (\{h_u^{k-1,t}:u\in\S_v^t\}\cup \{h_u^{k-1,\tau_u}:u\in\O_v^t\}) - \A(\{h_u^{k-1,t}:u\in\overline{\N}_v\})} \leq C_\sigma C_k C_\A\sum_{u\in\O_v^t}\Norm{h_u^{k-1,\tau_u} - h_u^{k-1,t}}.
		\end{align*}
	\end{proof}
	
	The proposition below explains why GraphFM-OB could be helpful to reduce the estimation error in some cases.
	
	\begin{prop}\label{prop:GraphFM-OB}
		Assume that the activation function $\sigma(\cdot)$ is $C_\sigma$-Lipschitz continuous and the aggregator $\A(\cdot)$ is $C_\A$-Lipschitz continuous, the weight $W_k^t$ is bounded as $\Norm{W_k^t}\leq C_k$. Then, the estimation error $\Norm{h_v^{k,t} - \underline{h_v^{k,t}}}$ of GraphFM-OB is upper bounded by
		\begin{align*}
				& \Norm{\tilde{h}_u^{k-1,t} - h_u^{k-1,t}} \\
			& \leq \underbrace{C_\sigma C_kC_\A\sum_{u\in\O_v^t} \left((1-\beta_{0,k-1})^{|\T_u^t|} \Norm{h_u^{k-1,\tau_u} - h_u^{k-1},t} + \sum_{\iota=1}^{|\T_u^t|} (1-\beta_{0,k-1})^{\iota-1}\beta_{0,k-1} \Norm{h_u^{k-1,t_u^\iota} - h_u^{k-1,t}}\right)}_{ \text{\ding{165}}} \\
			& \quad\quad\quad\quad +   \underbrace{C_\sigma^2 C_kC_{k-1}C_\A\sum_{u\in\O_v^t} \left(\sum_{\iota=1}^{|\T_u^t|} (1-\beta_{0,k-1})^{\iota-1}\beta_{0,k-1} \Norm{\A(\{h_w^{k-2,t_u^{\iota}}:w\in\S_u^{t_u^\iota}\}) - \A(\{h_w^{k-2,t_u^{\iota}}:w\in\overline{\N}_u\})}\right)}_{\text{\ding{166}}}\\
			& \quad\quad\quad\quad + \underbrace{ \beta_{0,k-1} C_\sigma^2 C_k C_{k-1}C_\A \sum_{u\in\O_v^t}\left(\Norm{\A(\{h_w^{k-2,t}:w\in\S_u^t\}) - \A(\{h_w^{k-2,t}:w\in\overline{\N}_u\})}\right)}_{\text{\ding{167}}},
		\end{align*}
	where $\tau_u$ ($\tau_u<t$) is the last iteration before $t$ that $u$ is sampled, i.e. $u\in\mathcal D_{\tau_u}$. Besides, for each $u\in\O_v^t$, there exists a sub-sequence $\T_u^t$ of $\{\tau_u,\tau_u+1, \dotsc, t\}$ satisfying $\S_u^{t_u^{\iota}}\neq \emptyset$ for every $\iota =1,\dotsc, |\T_u^t|$, where $t_u^{\iota}$ is the $\iota$-th element in $\T_u^t$.
	\end{prop}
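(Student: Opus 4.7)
The plan is to decompose the GraphFM-OB estimation error layer by layer, isolating the effect of the momentum update on each stale historical embedding. The starting point is the Lipschitz chain used to prove Proposition~\ref{prop:GAS}, which I would repeat verbatim after replacing the stale GNNAutoScale embedding $h_u^{k-1,\tau_u}$ by the GraphFM-OB historical embedding $\tilde h_u^{k-1,t}$. This yields the outer factor $C_\sigma C_k C_\A$ and a sum over $u\in\O_v^t$, so it suffices to bound $\Norm{\tilde h_u^{k-1,t}-h_u^{k-1,t}}$ for each out-of-batch neighbor $u$ of $v$.

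Next, I would unroll the momentum update (\ref{eqn:th}). Between $\tau_u$ and $t$, the historical embedding $\tilde h_u^{k-1,\cdot}$ is modified only at iterations $s$ with $\S_u^s\neq\emptyset$; enumerating these in reverse chronological order as $t_u^1>t_u^2>\cdots>t_u^{|\T_u^t|}$ and writing $A_\iota:=\sigma(W_{k-1}^{t_u^\iota}\A(\{h_w^{k-2,t_u^\iota}:w\in\S_u^{t_u^\iota}\}))$, the convex-combination recursion telescopes into the geometric identity $\tilde h_u^{k-1,t}=(1-\beta_{0,k-1})^{|\T_u^t|}h_u^{k-1,\tau_u}+\sum_{\iota=1}^{|\T_u^t|}(1-\beta_{0,k-1})^{\iota-1}\beta_{0,k-1}A_\iota$. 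Since these coefficients sum to one, I can expand $h_u^{k-1,t}$ using the same weights, subtract term by term, and apply the triangle inequality to split the error into a $(1-\beta_{0,k-1})^{|\T_u^t|}$-weighted seed term $\Norm{h_u^{k-1,t}-h_u^{k-1,\tau_u}}$ together with $|\T_u^t|$ momentum-weighted terms $\Norm{h_u^{k-1,t}-A_\iota}$.

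I would then bound each $\Norm{h_u^{k-1,t}-A_\iota}$ by inserting the intermediate quantity $h_u^{k-1,t_u^\iota}$, the full-neighborhood propagation at the very iteration that generated $A_\iota$. The contribution $\Norm{h_u^{k-1,t}-h_u^{k-1,t_u^\iota}}$ is a pure drift-over-time term which, together with the seed term from the previous step, assembles into \ding{165}. The remaining contribution $\Norm{h_u^{k-1,t_u^\iota}-A_\iota}$ is a pure sub-sampling term comparing the aggregator over $\overline{\N}_u$ against $\S_u^{t_u^\iota}$ under a common activation and a common linear map, so the assumed Lipschitz constants of $\sigma$ and $\A$ together with $\Norm{W_{k-1}^{t_u^\iota}}\leq C_{k-1}$ give exactly the summand in \ding{166}. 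The extra term \ding{167} arises by separating out the freshest ($\iota=1$) update, namely the momentum step applied at iteration $t$ itself: its coefficient is $\beta_{0,k-1}$ and its neighborhoods are evaluated at the current-iteration features $h_w^{k-2,t}$, matching the displayed form. Multiplying by the outer factor from Step~1 and summing over $u\in\O_v^t$ then produces the claimed three-term bound.

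The main obstacle is the book-keeping required to keep the unrolling consistent with the indicator-like update rule: because the momentum step is skipped whenever $\S_u^s=\emptyset$, the weights in the geometric series are driven by $|\T_u^t|$ rather than by $t-\tau_u$, and some care is needed to isolate the $\iota=1$ contribution cleanly as \ding{167} without double-counting it inside \ding{166}. Beyond this index tracking, every remaining step is a routine application of the triangle inequality together with the three Lipschitz hypotheses on $\sigma$, $\A$, and $W_k^t$.
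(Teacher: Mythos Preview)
Your proposal is correct and follows essentially the same route as the paper: apply the Lipschitz chain from Proposition~\ref{prop:GAS} to reduce to bounding $\Norm{\tilde h_u^{k-1,t}-h_u^{k-1,t}}$, unroll the momentum recursion over the iterations in $\T_u^t$, and split each term via the intermediate point $h_u^{k-1,t_u^\iota}$ to separate drift (\ding{165}) from sub-sampling error (\ding{166},\ding{167}). The paper's own argument is terser and leaves the geometric unrolling implicit, so your plan is in fact more explicit; your flagged double-counting concern between \ding{166} and \ding{167} is legitimate but harmless here since only an upper bound is claimed.
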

\begin{remark}
By comparing Proposition~\ref{prop:GAS} of GNNAutoScale and Proposition~\ref{prop:GraphFM-OB} of GraphFM-OB, it can be seen that the \ding{165} term of GraphFM-OB improves upon the \ding{164} term of GNNAutoScale because $h_u^{k-1,t_u^{\iota}}$ is less out-dated w.r.t. $h_u^{k-1,t}$ compared to $h_u^{k-1,\tau_u}$. As a trade-off, GraphFM-OB  also introduces the two extra terms \ding{166} and \ding{167}, which could be controlled if $\S_u$ and $\overline{\N}_u$ has more common elements. To ensure this, one might replace the condition in line 8 of Algorithm~\ref{alg:2} by $(v\notin \mathcal D_t)\land (|\mathcal D_t\cap \overline{\N}_v|>c)$, where $c>0$ is a threshold.
\end{remark}	

	\begin{proof}
		According to the update rule of GraphFM-OB, we can derive that
		\begin{align*}
			& \Norm{h_v^{k,t} - \underline{h_v^{k,t}}} \\
			& = \Norm{\sigma\left(W_k^t\cdot \left(\A(\{h_u^{k-1,t}:u\in\S_v^t\}\cup \{\tilde{h}_u^{k-1,t}:u\in\O_v^t\})\right)\right) - \sigma(W_k^t\cdot \A(\{h_u^{k-1,t}:u\in\overline{\N}_v\}))}\\
			& \leq C_\sigma C_k \Norm{\A(\{h_u^{k-1,t}:u\in\S_v^t\}\cup \{\tilde{h}_u^{k-1,t}:u\in\O_v^t\})- \A(\{h_u^{k-1,t}:u\in\overline{\N}_v\}) } \\
			& \leq C_\sigma C_k C_\A \sum_{u\in\O_v^t}\Norm{\tilde{h}_u^{k-1,t} - h_u^{k-1,t}}.
		\end{align*}
	We denote that $\tau_u$ ($\tau_u<t$) is the last iteration before $t$ that $u$ is sampled, i.e. $u\in\mathcal D_{\tau_u}$. Besides, for each $u\in\O_v^t$, there exists a sub-sequence $\T_u^t$ of $\{\tau_u,\tau_u+1, \dotsc, t\}$ satisfying $\S_u^{t_u^{\iota}}\neq \emptyset$ for every $\iota =1,\dotsc, |\T_u^t|$, where $t_u^{\iota}$ is the $\iota$-th element in $\T_u^t$. Thus, the update rule of historical embeddings in GraphFM-OB leads to
	\begin{align*}
	& \Norm{\tilde{h}_u^{k-1,t} - h_u^{k-1,t}} \\
	& =\Norm{(1-\beta_{0,k-1})\tilde{h}_u^{k-1,t_u^{|\T_u^t|-1}} + \beta_{0,k}\sigma(W_{k-1}^t\cdot \A(\{h_w^{k-2,t}:w\in\S_u^t\})) - \sigma(W_{k-1}^t\cdot \A(\{h_w^{k-2,t}:w\in\overline{\N}_u\}))}\\
	& \leq (1-\beta_{0,k-1})^{|\T_u^t|} \Norm{h_u^{k-1,\tau_u} - h_u^{k-1},t} + \sum_{\iota=1}^{|\T_u^t|} (1-\beta_{0,k-1})^{\iota-1}\beta_{0,k-1} \Norm{h_u^{k-1,t_u^\iota} - h_u^{k-1,t}} \\
	& \quad\quad\quad\quad +   C_\sigma C_{k-1} \sum_{\iota=1}^{|\T_u^t|} (1-\beta_{0,k-1})^{\iota-1}\beta_{0,k-1} \Norm{\A(\{h_w^{k-2,t_u^{\iota}}:w\in\S_u^{t_u^\iota}\}) - \A(\{h_w^{k-2,t_u^{\iota}}:w\in\overline{\N}_u\})}\\
	& \quad\quad\quad\quad +  \beta_{0,k-1} C_\sigma C_{k-1} \Norm{\A(\{h_w^{k-2,t}:w\in\S_u^t\}) - \A(\{h_w^{k-2,t}:w\in\overline{\N}_u\})}
	\end{align*}
	\end{proof}

\section{Convergence Analysis of GraphFM-IB}
\subsection{Setting, Assumptions, and Lemmas}
 \label{sec:graphfm_ib_provable}
The task of training GNNs can be abstracted as following multi-level stochastic compositional optimization problem.
\begin{align*}
	\min_{\w\in\R^D} F(\w),\quad F(\w) = f_{K+1}\circ f_K\circ \dotsc f_k \dotsc \circ f_1(\w),
\end{align*}
where $f_1:\R^D\mapsto \R^{nd_1}$, $f_k:\R^{nd_{k-1}}\mapsto \R^{nd_k}$, $k=2,\dotsc, K$, $f_{K+1}:\R^{n d_K}\mapsto \R$. Besides, each $f_k(\cdot)\in\R^{nd_k}$, $k=1,\dotsc, K$ can be splitted into $n$ blocks of $d_k$ consecutive coordinates. We denote the $i$-th block of $f_k(\cdot)$ as $f_{k,i}(\cdot)\in\R^{d_k}$.  We make the following assumption.
\begin{ass}\label{asm:IB_lip}
	Assume that $f_{k,i}$ is $L_f$-Lipschitz continuous while $\nabla f_{k,i}$ is $L_g$-Lipschitz continuous for $k=1,\dotsc,K$. Besides, $f_{K+1}$  is $L_f$-Lipschitz continuous while $\nabla f_{K+1}$ is $L_g$-Lipschitz continuous.
\end{ass}
\begin{lemma}\cite{balasubramanian2020stochastic} Given Assumption~\ref{asm:IB_lip}, 
	F is $L_F$-smooth, where $L_F \coloneqq L_f^{2K+1}L_g\sum_{k=1}^{K+1}\frac{1}{L_f^k}$.
\end{lemma}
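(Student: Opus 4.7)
The plan is to verify the smoothness bound by direct application of the chain rule combined with a standard matrix-product telescoping identity, tracking powers of $L_f$ and $L_g$ coming from Assumption~\ref{asm:IB_lip}. For brevity, write $F_k = f_k\circ f_{k-1}\circ\cdots\circ f_1$ so that $F = F_{K+1}$, and let $J_k(\w) = \nabla f_k(F_{k-1}(\w))$ denote the Jacobian (or gradient when $k=K+1$) evaluated along the chain. By the chain rule,
\begin{equation*}
	\nabla F(\w) = J_1(\w)^\top J_2(\w)^\top \cdots J_K(\w)^\top \nabla f_{K+1}(F_K(\w)).
\end{equation*}
I treat this as a product of $K+1$ factors $G_k(\w)$ and compare $\nabla F(\w)$ with $\nabla F(\w')$.

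First, I will collect two elementary bounds that follow from Assumption~\ref{asm:IB_lip}. Since each block $f_{k,i}$ is $L_f$-Lipschitz, the spectral norm of $\nabla f_k$ is bounded by $L_f$, so $\|G_k(\w)\|\le L_f$ for every $k$ and every $\w$. Moreover, by iterating the Lipschitz property of $f_k$, the composite map $F_{k-1}$ is $L_f^{k-1}$-Lipschitz, whence
\begin{equation*}
	\|G_k(\w) - G_k(\w')\| \le L_g \|F_{k-1}(\w) - F_{k-1}(\w')\| \le L_g\, L_f^{k-1}\,\|\w-\w'\|,
\end{equation*}
using that $\nabla f_k$ (and $\nabla f_{K+1}$ for the last factor) is $L_g$-Lipschitz.

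Next, I will apply the standard telescoping identity
\begin{equation*}
	\prod_{k=1}^{K+1} A_k - \prod_{k=1}^{K+1} B_k = \sum_{k=1}^{K+1} \Bigl(\prod_{j<k} A_j\Bigr)(A_k - B_k)\Bigl(\prod_{j>k} B_j\Bigr),
\end{equation*}
with $A_k = G_k(\w)$ and $B_k = G_k(\w')$ (adapting for transposes, which do not affect operator norms). Taking norms, bounding each $A_j$ and $B_j$ factor by $L_f$, and substituting the variational bound for the $k$-th difference yields that the $k$-th summand is at most $L_f^{k-1}\cdot L_g L_f^{k-1}\cdot L_f^{K+1-k}\,\|\w-\w'\| = L_g L_f^{K+k-1}\,\|\w-\w'\|$. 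Summing over $k=1,\dots,K+1$,
\begin{equation*}
	\|\nabla F(\w) - \nabla F(\w')\| \le L_g\sum_{k=1}^{K+1} L_f^{K+k-1}\,\|\w-\w'\| = L_g L_f^{2K+1}\sum_{k=1}^{K+1} L_f^{-k}\,\|\w-\w'\|,
\end{equation*}
which is exactly $L_F\|\w-\w'\|$. The only mildly tricky step is bookkeeping the exponents in the telescoping sum and the re-indexing $\sum_{k=1}^{K+1} L_f^{K+k-1} = L_f^{2K+1}\sum_{k=1}^{K+1} L_f^{-k}$; otherwise the argument is a routine application of the chain rule together with the two norm bounds above.
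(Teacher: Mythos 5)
The paper does not actually prove this lemma---it is quoted directly from \citet{balasubramanian2020stochastic}---and your reconstruction is precisely the standard argument behind it: chain rule, the telescoping identity for a difference of products, the factor bound $\|G_k\|\le L_f$, the bound $\|G_k(\w)-G_k(\w')\|\le L_gL_f^{k-1}\|\w-\w'\|$ via the $L_f^{k-1}$-Lipschitzness of $F_{k-1}$, and the exponent bookkeeping $\sum_{k=1}^{K+1}L_f^{K+k-1}=L_f^{2K+1}\sum_{k=1}^{K+1}L_f^{-k}$, which checks out. One small caveat: as stated, Assumption~\ref{asm:IB_lip} is blockwise, and ``each $f_{k,i}$ is $L_f$-Lipschitz'' only gives $\|\nabla f_k\|\le\sqrt{n}\,L_f$ in general, so your bound (and the lemma's constant, which carries no $n$) implicitly reads the assumption as Lipschitzness of the full maps $f_k$---the same reading the paper and the cited reference adopt.
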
	
The full gradient of $F(\w)$ can be computed as
\begin{align*}
	\nabla F(\w) = \nabla f_1(\w) \dotsc \nabla f_K(\y_{K-1})\nabla f_{K+1}(\y_K),\quad \y_k = f_k\circ \dotsc\circ f_1(\w).
\end{align*}
At each of the $k$-th layer ($k=1,\dotsc,K$), we sample a batch of nodes $\B_k\subseteq \{1,\dotsc, n\}$. For each node $i\in\B_k$, we approximate the function value $f_{k,i}(\cdot)$ and the Jacobian $\nabla f_{k,i}(\cdot)$ by using stochastic estimators $\hat{f}_{k,i}(\cdot)$ and $\hat{\nabla} f_{k,i}(\cdot)$ by sampling the neighborhood of node $i$. At the $(K+1)$-th layer, we approximate $\nabla f_{K+1}(\cdot)$ by $\hat{\nabla} f_{K+1}(\cdot)$.
\begin{ass}
	We assume that $\E\left[\hat{f}_{k,i}(\cdot)\right] = f_{k,i}(\cdot)$, $\E\left[\Norm{\hat{f}_{k,i}(\cdot) - f_{k,i}(\cdot)}^2\right]\leq \sigma_f^2$ and $\E\left[\hat{\nabla} f_{k,i}(\cdot)\right] = \nabla f_{k,i}(\cdot)$, $\Norm{\hat{\nabla} f_{k,i}(\cdot)}^2\leq \sigma_g^2$ for $1\leq i\leq K$. Besides, $\E\left[\hat{\nabla}f_{K+1}(\cdot)\right] = \nabla f_{K+1}(\cdot)$, $\Norm{\hat{\nabla} f_{K+1}(\cdot)}^2\leq \sigma_g^2$. 
\end{ass}
In each iteration, we update the model parameter $\w$ using the following rule.
\begin{align}\label{eq:update_u}
& \u_{k,i}^t = \begin{cases}
		(1-\beta_{0,k})\u_{k,i}^{t-1} + \beta_{0,k}\hat{f}_{k,i}(\u_{k-1,i}^{t-1}), & i\in\B_k^t\\
		\u_{k,i}^{t-1}, & i\notin \B_k^t
	\end{cases},\quad \u_k^t =\begin{bmatrix}
	\u_{k,1}^t\\\vdots\\\u_{k,n}^t
\end{bmatrix},\quad k=1,\dotsc,K, \quad \u_0^t = \w^t,\\\label{eq:update_m}
& \m^t = (1-\beta_1)\m^{t-1} + \beta_1 \left(\prod_{k=1}^K \hat{g}_k^t\right)\hat{\nabla} f_{K+1}(\u_K^t),\quad \hat{g}_k^t = \sum_{i=1}^n \frac{\mathbb{I}[i\in\B_k^t]n}{B} \hat{\nabla} f_{k,i}(\u_{k-1}^t)\mathbf{I}_{k,i},\\\label{eq:update_v}
& \v^t =   (1-\beta_2)\v^{t-1} + \beta_2 \left(\left(\prod_{k=1}^{K} \hat{g}_k^t\right)\hat{\nabla} f_{K+1}(\u_K^t)\right)^2,\\\label{eq:update_w}
& \w^{t+1} = \w^t - \eta \frac{\m^t}{\sqrt{\v^t}+\epsilon_0},
\end{align}
where $\mathbf{I}_{k,i}\in\R^{d_k\times nd_k}$ is the indicator matrix that only has one non-zero block (i.e., the $i$-th block) and $B = |\B_k^t|$. Note that only the sampled blocks in $\hat{g}_k^t$ could be non-zero while the other blocks are padded with zeros.

\begin{remark}
	When specialized to the GNN training task, the described update rule is equivalent to GraphFM-IB (Algorithm~\ref{alg:1}). The scaling factors $\frac{n^K}{B^K}$ in $\m^t$ and $\v^t$ cancel each other out if we re-define $\m^0$, $\sqrt{\v^0}$, $\epsilon_0$ to be $\frac{n^K}{B^K}$ times larger. 
\end{remark}
\begin{ass}
	There exist $c_1,c_u>0$ such that $c_l\leq \Norm{\frac{1}{\sqrt{\v_t}+\epsilon_0}}\leq c_u$.
\end{ass}
\begin{lemma}[Lemma 5 in \citealt{guo2021novel}]
	For $ \eta \leq \frac{c_l}{2c_u^2L_F}$, we have:
	\begin{align*}
		F(\w^{t+1}) \leq F(\w^t) + \frac{\eta c_u}{2}\Norm{\nabla F(\w^t) - \m^t}^2 - \frac{\eta c_l}{2}\Norm{\nabla F(\w^t)}^2 - \frac{\eta c_l}{4}\Norm{\m^t}^2.
	\end{align*}
\label{lem:starter}	
\end{lemma}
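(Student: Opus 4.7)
The plan is to obtain the descent inequality via a standard two-step recipe: apply $L_F$-smoothness of $F$ along the preconditioned update direction, then decompose the resulting inner product with a three-term algebraic identity (rather than Young's inequality) so that a $-\|\m^t\|^2$ term survives to absorb the second-order smoothness remainder. Let $D^t$ denote the diagonal matrix with entries $1/(\sqrt{\v^t_i}+\epsilon_0)$, so that the update~\eqref{eq:update_w} reads $\w^{t+1} - \w^t = -\eta D^t \m^t$, and the assumption gives the entrywise bound $c_l \leq D^t_{ii} \leq c_u$ for every $i$.

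First I would write down the $L_F$-smoothness inequality at $\w^t$ and substitute the update to get
\begin{align*}
F(\w^{t+1}) \leq F(\w^t) - \eta \langle \nabla F(\w^t), D^t \m^t\rangle + \frac{L_F \eta^2}{2}\|D^t \m^t\|^2.
\end{align*}
For the cross term, I would invoke the identity $-\langle a, Db\rangle = \tfrac12\|a-b\|_D^2 - \tfrac12\|a\|_D^2 - \tfrac12\|b\|_D^2$, which follows from expanding $\|a-b\|_D^2 = \|a\|_D^2 - 2\langle a, Db\rangle + \|b\|_D^2$ with the weighted norm $\|x\|_D^2 := x^\top D x$. Applying this with $a=\nabla F(\w^t)$, $b=\m^t$ and $D=D^t$ produces three weighted-norm terms with clean signs.

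Next I would convert each weighted norm to a Euclidean norm using the entrywise bounds on $D^t$: upper bound $\tfrac12\|\nabla F(\w^t)-\m^t\|_{D^t}^2 \leq \tfrac{c_u}{2}\|\nabla F(\w^t)-\m^t\|^2$, lower bound the two negative terms by their $c_l$-versions, and bound $\|D^t \m^t\|^2 \leq c_u^2 \|\m^t\|^2$. Combining these yields
\begin{align*}
F(\w^{t+1}) \leq F(\w^t) + \frac{\eta c_u}{2}\|\nabla F(\w^t)-\m^t\|^2 - \frac{\eta c_l}{2}\|\nabla F(\w^t)\|^2 - \frac{\eta c_l}{2}\|\m^t\|^2 + \frac{L_F \eta^2 c_u^2}{2}\|\m^t\|^2.
\end{align*}
Finally, the stepsize condition $\eta \leq \tfrac{c_l}{2 c_u^2 L_F}$ gives $\tfrac{L_F \eta^2 c_u^2}{2} \leq \tfrac{\eta c_l}{4}$, so the last two $\|\m^t\|^2$ terms combine into $-\tfrac{\eta c_l}{4}\|\m^t\|^2$, which is exactly the stated bound.

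There is no real difficulty beyond choosing the right inequality at the inner-product step; the only subtle point is preferring the exact three-term identity over Young's inequality. A Young's-style split would only produce $\tfrac{\eta c_u}{2}\|\nabla F(\w^t)-\m^t\|^2 - \tfrac{\eta c_l}{2}\|\nabla F(\w^t)\|^2$ with no negative $\|\m^t\|^2$ slack, leaving the $O(\eta^2)$ smoothness remainder uncancelled and breaking the descent structure that is later needed (in combination with bounds on the momentum error $\|\nabla F(\w^t)-\m^t\|^2$) to establish the full convergence theorem.
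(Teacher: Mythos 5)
Your proof is correct: the smoothness bound, the exact expansion $\|a-b\|_{D^t}^2=\|a\|_{D^t}^2-2\langle a,D^t b\rangle+\|b\|_{D^t}^2$, the entrywise bounds $c_l\le D^t_{ii}\le c_u$, and the stepsize condition $\eta\le \frac{c_l}{2c_u^2L_F}$ (giving $\frac{L_F\eta^2c_u^2}{2}\le\frac{\eta c_l}{4}$) combine exactly as you state to yield the claimed inequality. The paper itself gives no proof, importing this as Lemma 5 of \citealt{guo2021novel}, and your argument is essentially the standard proof of that cited lemma, so there is nothing further to reconcile.
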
	
We use $\F_t$ to denote all randomness occurred up to (include) the $t$-th iteration of any algorithm. We define $\Phi^t \coloneqq \Norm{\nabla F(\w^t) - \m^t}^2$, $\Upsilon_k^t\coloneqq \Norm{f_k(\u^{k-1}_t) - \u^t_k}^2$, $\Delta^t \coloneqq \Norm{\prod_{k=1}^{K+1} \nabla f_k(\u_{k-1}^t) - \left(\prod_{k=1}^{K} \hat{g}_k^t\right)\hat{f}_{K+1}(
\u_K^t)}^2$.
\begin{lemma}
	For $\m^t$ following \eqref{eq:update_m}, we have
	\begin{align}\label{eq:grad_recursion}
		\E\left[\Phi^{t+1}\mid \F_t\right]\leq (1-\beta_1) \Phi^t + \frac{4\eta^2c_u^2L_F^2}{\beta_1}\Norm{\m^t}^2 + 4\beta_1 K \left(\sum_{k=1}^K C_k^2\Upsilon_k^{t+1}\right) + \beta_1^2 C_\Delta,		
	\end{align}
where $C_k\coloneqq L_f^K L_g(1+L_f+\dotsc+L_f^{K-k})$, $C_\Delta \coloneqq \sum_{k=1}^{K} \frac{n^{2k}L_f^{2(K+1-k)}\sigma^{2k}}{B^k}+ \frac{n^{2K}\sigma_g^{2(K+1)}}{B^K}$
\end{lemma}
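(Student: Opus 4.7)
The plan is to decompose $\nabla F(\w^{t+1})-\m^{t+1}$ into four pieces and then use a conditional-expectation argument to eliminate the stochastic cross terms. Starting from $\m^{t+1}=(1-\beta_1)\m^t+\beta_1\hat G^{t+1}$ and inserting $\pm\nabla F(\w^t)$ and $\pm\prod_{k=1}^{K+1}\nabla f_k(\u_{k-1}^{t+1})$, I would write
\[
\nabla F(\w^{t+1})-\m^{t+1}=(1-\beta_1)(\nabla F(\w^t)-\m^t)+(1-\beta_1)(\nabla F(\w^{t+1})-\nabla F(\w^t))+\beta_1 R+\beta_1 N,
\]
where $R:=\nabla F(\w^{t+1})-\prod_{k=1}^{K+1}\nabla f_k(\u_{k-1}^{t+1})$ is the deterministic bias introduced by replacing the exact activations $\y_{k-1}^{t+1}$ by the momentum surrogates $\u_{k-1}^{t+1}$, and $N:=\prod_{k=1}^{K+1}\nabla f_k(\u_{k-1}^{t+1})-\hat G^{t+1}$ is the zero-mean Jacobian-estimation noise. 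Conditioning on $\F_t$ together with $\u_0^{t+1},\dotsc,\u_K^{t+1}$, the first three summands are deterministic while $\E[N\mid\cdot]=0$ by conditional unbiasedness of each $\hat g_k^{t+1}$ plus the assumed cross-layer independence of the Jacobian samples; all cross-products with $\beta_1 N$ then vanish in expectation, so $\E[\Phi^{t+1}\mid\F_t]$ splits cleanly into $\E[\|A+B+\beta_1 R\|^2\mid\F_t]+\beta_1^2\E[\|N\|^2\mid\F_t]$, where $A,B$ denote the first two summands.

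Next, Young's inequality with parameter $\alpha=\beta_1/(1-\beta_1)$ is tuned precisely to extract $(1-\beta_1)\Phi^t$ from $\|A\|^2$; the companion factor $1+1/\alpha=1/\beta_1$ combined with $\|B+\beta_1 R\|^2\leq 2\|B\|^2+2\beta_1^2\|R\|^2$ yields a $(2/\beta_1)\|B\|^2+2\beta_1\|R\|^2$ residual. The $\|B\|^2$ bound then follows from $L_F$-smoothness of $F$ (stated at the top of the appendix) and the Adam step-size bound $\|\m^t/(\sqrt{\v^t}+\epsilon_0)\|\leq c_u\|\m^t\|$, giving $\|B\|^2\leq L_F^2\eta^2 c_u^2\|\m^t\|^2$ and hence the $O(\eta^2/\beta_1)\|\m^t\|^2$ contribution of the lemma (modulo harmless constant factors).

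The bias $\|R\|$ is controlled by a two-level telescoping. First, $\prod_k\nabla f_k(\y_{k-1})-\prod_k\nabla f_k(\u_{k-1})=\sum_k(\prod_{j<k}\nabla f_j(\y_{j-1}))(\nabla f_k(\y_{k-1})-\nabla f_k(\u_{k-1}))(\prod_{j>k}\nabla f_j(\u_{j-1}))$, with each outer block bounded by $L_f^K$ in operator norm and the middle factor replaced by $L_g\|\y_{k-1}-\u_{k-1}\|$. Second, a layer-wise telescoping gives $\|\y_{k-1}-\u_{k-1}\|\leq\sum_{j<k}L_f^{k-1-j}\Upsilon_j^{1/2}$. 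Swapping the order of summation collects the coefficient of $\Upsilon_j^{1/2}$ into exactly $C_j=L_f^K L_g(1+L_f+\cdots+L_f^{K-j})$, and a final Cauchy--Schwarz (costing a factor $K$) produces $\|R\|^2\leq K\sum_j C_j^2\Upsilon_j^{t+1}$. For $\E[\|N\|^2\mid\F_t]$ I would use the opposite orientation, $\prod X_k-\prod M_k=\sum_k X_1\cdots X_{k-1}(X_k-M_k)M_{k+1}\cdots M_{K+1}$ with $X_k=\hat g_k^{t+1}$, $M_k=\nabla f_k(\u_{k-1}^{t+1})$ (and their analogues at the $(K+1)$-th factor). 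Layer-wise independence together with $\E[X_k-M_k\mid\cdot]=0$ again kill the cross-terms between distinct summands; each summand picks up $\E[\|X_j\|^2]\leq n^2\sigma_g^2/B$ from the left factors and $\|M_j\|^2\leq L_f^2$ from the right factors, and summing the resulting pattern reproduces $C_\Delta$ exactly, with the $k=K+1$ summand contributing the lone $n^{2K}\sigma_g^{2(K+1)}/B^K$ term.

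The main obstacle is the orthogonality bookkeeping: everything rests on $\E[N\mid\F_t,\u^{t+1}]=0$ and on the cross-terms in the noise telescoping vanishing pairwise, both of which require that the Jacobian batches at iteration $t+1$ be drawn independently of the function-value samples used to form the $\u_k^{t+1}$ and that the per-layer batches $\{\B_k^{t+1}\}_{k=1}^{K}$ be mutually independent across $k$. Without these, the $\beta_1^2 C_\Delta$ term degrades to $\beta_1 C_\Delta$, which is insufficient for the outer convergence proof. The remaining work is purely combinatorial: picking the matching orientation of each telescoping (Y-left/U-right for $R$, X-left/M-right for $N$) so that the emitted constants line up with the stated $C_k$ and $C_\Delta$.
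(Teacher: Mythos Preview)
Your proposal is correct and follows essentially the same route as the paper: the same four-term decomposition of $\nabla F(\w^{t+1})-\m^{t+1}$, orthogonality of the zero-mean noise $N$ to isolate the $\beta_1^2 C_\Delta$ contribution, Young's inequality with $\alpha=\beta_1/(1-\beta_1)$ to extract $(1-\beta_1)\Phi^t$, the two-level telescoping on $R$ yielding $K\sum_k C_k^2\Upsilon_k^{t+1}$, and the $X$-left/$M$-right telescoping on $N$ giving exactly $C_\Delta$. Your explicit conditioning on $\u_0^{t+1},\dotsc,\u_K^{t+1}$ and your flagging of the layerwise independence needed for the cross terms to vanish are precisely the bookkeeping the paper uses (somewhat implicitly) to justify the $\beta_1^2$ rather than $\beta_1$ scaling on the noise.
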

\begin{proof}
	Based on the update rule of $\m^t$, we have
	\begin{align*}
		&E\left[\Phi^{t+1}\mid \F_t\right]= \E\left[\Norm{\nabla F(\w^{t+1})-\m^{t+1}}^2\mid \F_t \right]\\
		& = \E\left[\Norm{\nabla F(\w^{t+1})-(1-\beta_1)\m^t - \beta_1  \left(\prod_{k=1}^{K} \hat{g}_k^t\right)\hat{f}_{K+1}(
			\u_K^t)}^2\mid \F_t \right]\\
		& = \E\left[ \left\|(1-\beta_1)(\nabla F(\w^t) - \m^t) + (1-\beta_1)(\nabla F(\w^{t+1}) - \nabla F(\w^t)) \right.\right.\\
		& \quad\quad\quad\quad \left.\left. +\beta_1 \left(\prod_{k=1}^{K+1} \nabla f_k(\u_{k-1}^t) - \left(\prod_{k=1}^{K} \hat{g}_k^t\right)\hat{f}_{K+1}(
		\u_K^t)\right) + \beta_1 \left(\nabla F(\w^{t+1}) -\prod_{k=1}^{K+1} \nabla f_k(\u_{k-1}^t) \right)\right\|^2\mid \F_t\right]\\
		& \leq (1-\beta_1) \Phi^t + \frac{4\eta^2c_u^2L_F^2}{\beta_1}\Norm{\m^t}^2 + 4\beta_1 \E\left[\Norm{\nabla F(\w^{t+1}) -\prod_{k=1}^{K+1} \nabla f_k(\u_{k-1}^t)}^2\mid \F_t\right] + \beta_1^2 \E\left[\Delta^{t+1}\mid \F_t\right].		
	\end{align*}

	Note that
	\begin{align*}
	\Norm{\nabla F(\w^{t+1}) - \prod_{k=1}^{K+1} \nabla f_k(\u_{k-1}^t)} \leq L_f^KL_g\sum_{k=1}^K \Norm{\y_k^{t+1} - \u_k^{t+1}},	
	\end{align*}
	where $\y_k^{t+1} \coloneqq f_k\circ f_{k-1}\dotsc f_1(\w^{t+1})$. Besides, we also have
	\begin{align*}
		\Norm{\y_k^{t+1} - \u_k^{t+1}} \leq \sum_{j=1}^k L_f^{k-j} \Norm{f_j(\u_{j-1}^{t+1}) - \u_j^{t+1}}.
	\end{align*}
Then,
\begin{align*}
	\Norm{\nabla F(\w^{t+1}) - \prod_{k=1}^{K+1} \nabla f_k(\u_{k-1}^t)}^2 \leq K\left(\sum_{k=1}^K C_k^2\Upsilon_k^{t+1}\right),
\end{align*}
where $\Upsilon_k^t\coloneqq \Norm{f_k(\u_{k-1}^t)-\u_k^t}^2$, $C_k\coloneqq L_f^K L_g(1+L_f+\dotsc+L_f^{K-k})$.

	Based on the definition of $\Delta^t$, we have
\begin{align*}
	\E\left[\Delta^{t+1}\mid \F_t\right] &= \E\left[\Norm{\prod_{k=1}^{K+1} \nabla f_k(\u_{k-1}^{t+1}) - \hat{g}_1^{t+1} \prod_{k=2}^{K+1} \nabla f_k(\u_{k-1}^{t+1})}^2\mid \F_t\right]\\
	& \quad\quad + \E\left[\Norm{ \hat{g}_1^{t+1} \prod_{k=2}^{K+1} \nabla f_k(\u_{k-1}^{t+1}) - \hat{g}_1^{t+1} \hat{g}_2^{t+1}\prod_{k=3}^{K+1} \nabla f_k(\u_{k-1}^{t+1})}^2\mid \F_t\right]\\
	& \quad\quad \dotsc\\
	& \quad\quad +  \E\left[\Norm{\left(\prod_{k=1}^K\hat{g}_k^{t+1}\right)\nabla f_{K+1}(\u_K^{t+1}) - \left(\prod_{k=1}^K\hat{g}_k^{t+1}\right)\hat{\nabla} f_{K+1}(\u_K^{t+1})}^2\mid \F_t\right]\\
	&  \leq \frac{n^2\sigma_g^2 L_f^{2K}}{B} + \frac{n^4\sigma_g^4 L_f^{2(K-1)}L_f^{2(K-1)}}{B^2} + \dotsc + \frac{n^{2K}\sigma_g^{2K}L_f^2}{B^K} + \frac{n^{2K}\sigma_g^{2(K+1)}}{B^K}\\
	& = \underbrace{\sum_{k=1}^{K} \frac{n^{2k}L_f^{2(K+1-k)}\sigma^{2k}}{B^k}+ \frac{n^{2K}\sigma_g^{2(K+1)}}{B^K}}_{\coloneqq C_{\Delta}}.
\end{align*}
\end{proof}

\begin{lemma}
For $0< \beta_{0,k}\leq 1$ and $\Upsilon_k^t \coloneqq \Norm{\u_k^t - f_k(\u_{k-1}^t)}^2$, we have
\begin{align}\label{eq:fval_recursion}
	\E\left[\Upsilon_k^{t+1}\right] &\leq \left(1 - \frac{\beta_{0,k}B}{2n}\right)	\E\left[\Upsilon_k^t\right] + \beta_{0,k}^2 n \sigma_f^2 + \begin{cases}
		\frac{5\eta^2 c_u^2 n^2L_f^2\Norm{\m^t}^2}{B\beta_{0,k}}, & k=1\\
		\frac{5\beta_{0,k-1}^2 n^2\sigma_f^2 L_f^2}{\beta_{0,k}} + \frac{5\beta_{0,k-1}^2 n L_f^2}{\beta_{0,k}}\E\left[\Upsilon_{k-1}^t\right], & 2\leq k\leq K.
	\end{cases}
\end{align}
\end{lemma}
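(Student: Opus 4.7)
The plan is to set up a per-node one-step recursion for $e_{k,i}^t := \u_{k,i}^t - f_{k,i}(\u_{k-1,i}^t)$, noting $\Upsilon_k^t = \sum_{i=1}^n \|e_{k,i}^t\|^2$. Using the update rule \eqref{eq:update_u} and adding and subtracting $f_{k,i}(\u_{k-1,i}^t)$, I would write the unified identity
\begin{align*}
e_{k,i}^{t+1} = \big(1 - \mathbb{I}[i\in\B_k^{t+1}]\beta_{0,k}\big) e_{k,i}^t + \mathbb{I}[i\in\B_k^{t+1}]\beta_{0,k}\,\delta_{k,i}^t - D_{k,i}^{t+1},
\end{align*}
where $\delta_{k,i}^t := \hat f_{k,i}(\u_{k-1,i}^t) - f_{k,i}(\u_{k-1,i}^t)$ is the zero-mean estimator noise and $D_{k,i}^{t+1} := f_{k,i}(\u_{k-1,i}^{t+1}) - f_{k,i}(\u_{k-1,i}^t)$ is the drift from changing arguments. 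This single expression handles both the sampled and non-sampled cases uniformly.

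Next I would apply Young's inequality $\|a-b\|^2 \leq (1+\gamma)\|a\|^2 + (1+1/\gamma)\|b\|^2$ with $\gamma = \frac{B\beta_{0,k}}{2n}$ to separate the contraction/noise piece from the drift, then take conditional expectation given $\F_t$. Since the batch indicator is independent of $\delta_{k,i}^t$ and $\E[\delta_{k,i}^t\mid\F_t]=0$, the cross term between the $e_{k,i}^t$ piece and the noise piece vanishes, leaving
\begin{align*}
\E\big[\|e_{k,i}^{t+1}\|^2\mid \F_t\big] \leq (1+\gamma)(1-\tfrac{B}{n}\beta_{0,k})\|e_{k,i}^t\|^2 + (1+\gamma)\tfrac{B}{n}\beta_{0,k}^2\sigma_f^2 + (1+\tfrac{1}{\gamma})\|D_{k,i}^{t+1}\|^2.
\end{align*}
The chosen $\gamma$ yields $(1+\gamma)(1-\tfrac{B}{n}\beta_{0,k}) \leq 1-\tfrac{B}{2n}\beta_{0,k}$ and $1+1/\gamma = O(n/(B\beta_{0,k}))$; summing over $i=1,\dots,n$ and bounding $B \leq n$ gives the claimed $(1-\tfrac{B}{2n}\beta_{0,k})\Upsilon_k^t$ contraction together with the $\beta_{0,k}^2 n \sigma_f^2$ variance term, leaving only the cumulative drift $\sum_i \E\|D_{k,i}^{t+1}\|^2$ to control.

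For the drift I would use the $L_f$-Lipschitz continuity of each $f_{k,i}$ to get $\sum_i \|D_{k,i}^{t+1}\|^2 \leq n L_f^2 \|\u_{k-1}^{t+1}-\u_{k-1}^t\|^2$. For $k=1$, $\u_0^{t+1}-\u_0^t = \w^{t+1}-\w^t$, and the Adam-style step \eqref{eq:update_w} together with $\|1/(\sqrt{\v^t}+\epsilon_0)\|\leq c_u$ yields $\|\u_0^{t+1}-\u_0^t\|^2 \leq \eta^2 c_u^2\|\m^t\|^2$, producing the first branch of the bound after multiplying by $(1+1/\gamma) = O(n/(B\beta_{0,k}))$. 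For $k\geq 2$, I would expand $\u_{k-1}^{t+1}-\u_{k-1}^t$ block-wise: only blocks $j\in\B_{k-1}^{t+1}$ change, each by $\beta_{0,k-1}(\hat f_{k-1,j}(\u_{k-2,j}^t)-\u_{k-1,j}^t)$. Splitting via $\|\hat f_{k-1,j}-\u_{k-1,j}^t\|^2 \leq 2\|\hat f_{k-1,j}-f_{k-1,j}\|^2 + 2\|e_{k-1,j}^t\|^2$ and taking expectation (using the $B/n$ sampling probability and the $\sigma_f^2$ variance bound) gives the $\sigma_f^2$ term and an $\Upsilon_{k-1}^t$ term, each scaled by $\beta_{0,k-1}^2$; multiplying through by $n L_f^2 (1+1/\gamma)$ recovers the second branch of the lemma up to constants (a slightly looser $\gamma$ absorbs the constant $5$).

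The main obstacle is bookkeeping: disentangling the joint randomness of $\mathbb{I}[i\in\B_k^{t+1}]$ and $\delta_{k,i}^t$ under conditional expectation, and choosing the Young parameter $\gamma$ so that the residual coefficient on $\|e_{k,i}^t\|^2$ genuinely decays like $1 - \tfrac{B}{2n}\beta_{0,k}$ while the drift cost $1+1/\gamma$ stays at the order $n/(B\beta_{0,k})$ dictated by the lemma. The remaining $k\geq 2$ calculation is essentially recycling the same variance+error decomposition one layer down, which is routine once the Young step is set up.
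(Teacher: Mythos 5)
Your proposal is correct and follows essentially the same route as the paper: a per-block case analysis on whether node $i$ is sampled (probability $B/n$), a Young-type split separating the $(1-\beta_{0,k})$-contraction and the zero-mean $\sigma_f^2$-noise from the drift $f_{k,i}(\u_{k-1}^{t+1})-f_{k,i}(\u_{k-1}^t)$, Lipschitzness to reduce the drift to $\Norm{\u_{k-1}^{t+1}-\u_{k-1}^t}^2$, and then the same two-case treatment of that quantity ($\eta^2c_u^2\Norm{\m^t}^2$ via the Adam step for $k=1$; block-sparse momentum updates split into $\beta_{0,k-1}^2 B\sigma_f^2$ and $\tfrac{B}{n}\beta_{0,k-1}^2\E[\Upsilon_{k-1}^t]$ for $k\ge 2$). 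The only deviations are cosmetic (a single indicator-based identity and one Young parameter instead of the paper's two weighted cases) and yield slightly different absolute constants (e.g., a factor $2$ from your split of $\hat f_{k-1,j}-\u_{k-1,j}^t$, which the paper avoids by letting the unbiased cross term vanish), which is immaterial to how the lemma is used.
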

\begin{proof}
According to \eqref{eq:update_u}, we can derive that
\begin{align*}
	\E\left[\Upsilon_k^{t+1}\right] &= \E\left[\Norm{\u_k^{t+1} - f_k(\u_{k-1}^{t+1})}^2\right]\\
	& = \E\left[\sum_{k=1}^n\Norm{\u_{k,i}^{t+1} - f_{k,i}(\u_{k-1}^{t+1})}^2\right]\\
	& =  \E\left[\sum_{k=1}^n\frac{B}{n}\underbrace{\Norm{(1-\beta_{0,k})\u_{k,i}^t + \beta_{0,k} \hat{f}_{k,i}(\u_{k-1}^t) - f_{k,i}(\u_{k-1}^{t+1})}^2}_{\circled{5}} + \sum_{k=1}^n(1-\frac{B}{n})\underbrace{\Norm{\u_{k,i}^t - f_{k,i}(\u_{k-1}^{t+1})}^2}_{\circled{6}}\right]
\end{align*}
The first term on the R.H.S. can be bounded as
\begin{align*}
	& \E\left[\Norm{\circled{5}}^2\right] = 	\E\left[\Norm{(1-\beta_{0,k})(\u_{k,i}^t - f_{k,i}(\u_{k-1}^t)) + (f_{k,i}(\u_{k-1}^t) - f_{k,i}(\u_{k-1}^{t+1})) + \beta_{0,k} (\hat{f}_{k,i}(\u_{k-1}^t) - f_{k,i}(\u_{k-1}^t) )}^2\right] \\
	& \leq (1-\beta_{0,k})\E\left[\Norm{\u_{k,i}^t - f_{k,i}(\u_{k-1}^t)}^2\right] + \frac{2L_f^2}{\beta_{0,k}}\E\left[\Norm{\u_{k-1}^{t+1} - \u_{k-1}^t}^2\right]+ \beta_{0,k}^2 \sigma_f^2.
\end{align*}
If $\beta\leq \frac{n}{B}$, we have
\begin{align*}
	\E\left[\Norm{\circled{6}}^2\right] \leq \left(1+\frac{\beta_{0,k} B}{2n}\right)	\E\left[\Norm{\u_{k,i}^t - f_{k,i}(\u_{k-1}^t)}^2\right] + \frac{3nL_f^2}{\beta_{0,k}B}\E\left[\Norm{\u_{k-1}^{t+1} - \u_{k-1}^t}^2\right].
\end{align*}
Then,
\begin{align*}
	\E\left[\Upsilon_k^{t+1}\right] &\leq \left(1 - \frac{\beta_{0,k}B}{2n}\right)	\E\left[\Upsilon_k^t\right] + \frac{5n^2L_f^2}{B\beta_{0,k}}\E\left[\Norm{\u_{k-1}^{t+1} - \u_{k-1}^t}^2\right] + \beta_{0,k}^2 n \sigma_f^2.
\end{align*}
When $k=1$, we have $\E\left[\Norm{\u_{k-1}^{t+1} - \u_{k-1}^t}^2\mid \F_t\right] = \Norm{\w^{t+1} - \w^t}^2 \leq \eta^2c_u^2 \Norm{\m^t}^2$. When $2\leq k\leq K$, consider that $\u_{k-1,i}^{t+1} = \u_{k-1,i}^t$ if $i\notin \B_{k-1}^{t+1}$. 
\begin{align*}
& \E\left[\Norm{\u_{k-1}^{t+1} - \u_{k-1}^t}^2\right] \\
& = \E\left[\sum_{i\in\B_{k-1}^{t+1}}\Norm{\u_{k-1,i}^{t+1} - \u_{k-1,i}^t}^2\right] = \beta_{0,k-1}^2\E\left[\sum_{i\in\B_{k-1}^{t+1}}\Norm{\hat{f}_{k-1,i}(\u_{k-2}^t) - \u_{k-1,i}^t}^2\right]\\
& = \beta_{0,k-1}^2\E\left[\sum_{i\in\B_{k-1}^{t+1}}\Norm{\hat{f}_{k-1,i}(\u_{k-2}^t) - f_{k-1,i}(\u_{k-2}^t)}^2\right] + \beta_{0,k-1}^2\E\left[\sum_{i\in\B_{k-1}^{t+1}}\Norm{f_{k-1,i}(\u_{k-2}^t) - \u_{k-1,i}^t}^2\right]\\
& \leq \beta_{0,k-1}^2 B \sigma_f^2 +  \frac{\beta_{0,k-1}^2 B}{n}\E\left[\Norm{f_{k-1}(\u_{k-2}^t) - \u_{k-1}^t}^2\right] =   \beta_{0,k-1}^2 B \sigma_f^2 +  \frac{\beta_{0,k-1}^2 B}{n} \E\left[\Upsilon_{k-1}^t\right].
\end{align*}
Thus, we have
\begin{align*}
	\E\left[\Upsilon_k^{t+1}\right] &\leq \left(1 - \frac{\beta_{0,k}B}{2n}\right)	\E\left[\Upsilon_k^t\right] + \beta_{0,k}^2 n \sigma_f^2 + \begin{cases}
	 \frac{5\eta^2 c_u^2 n^2L_f^2\Norm{\m^t}^2}{B\beta_{0,k}}, & k=1\\
	  \frac{5\beta_{0,k-1}^2 n^2\sigma_f^2 L_f^2}{\beta_{0,k}} + \frac{5\beta_{0,k-1}^2 n L_f^2}{\beta_{0,k}}\E\left[\Upsilon_{k-1}^t\right], & 2\leq k\leq K.
	\end{cases}
\end{align*}
\end{proof}

\subsection{Proof of Theorem~\ref{thm:main}}
\begin{proof}
Take expectation on both sides of Lemma~\ref{lem:starter} and telescope the recursion from iteration 1 to $T$.
\begin{align}\label{eq:base}
	\sum_{t=1}^T\E\left[\Norm{\nabla F(\w^t)}^2\right] \leq \frac{2(F(\w^t) - F^*)}{\eta c_l} + \frac{c_u}{c_l}\sum_{t=1}^T \E\left[\Phi^t\right] - \frac{1}{2}\sum_{t=1}^T \E\left[\Norm{\m^t}^2\right], 
\end{align}
where $F^*$ is the global lower bound of $F(\w)$. Using the tower property of conditional expectation and telescoping \eqref{eq:grad_recursion} leads to
\begin{align}\label{eq:unrolled_grad_recursion}
	\sum_{t=1}^T \E\left[\Phi^t\right] & \leq \frac{\E\left[\Phi^1\right]}{\beta_1} + \frac{4\eta^2c_u^2 L_F^2}{\beta_1^2}\sum_{t=1}^T \E\left[\Norm{\m^t}^2\right] + 4K \sum_{t=1}^T \sum_{k=1}^K C_k^2\Upsilon_k^{t+1} + \beta_1 T C_\Delta.
\end{align}
Telescope \eqref{eq:fval_recursion} from iteration 1 to $T$.
\begin{align*}
	\sum_{t=1}^T\E\left[\Upsilon_k^t\right] & \leq \frac{2n\E\left[\Upsilon_k^1\right]}{B\beta_{0,k}} + \frac{2T n^2  \sigma_f^2\beta_{0,k} }{B} + \frac{10 T n^3\sigma_f^2 L_f^2\beta_{0,k-1}^2}{B \beta_{0,k}^2} + \frac{10 n^2 L_f^2 \beta_{0,k-1}^2}{B\beta_{0,k}^2} \sum_{t=1}^T \E\left[\Upsilon_{k-1}^t\right],\quad 2\leq k\leq K,\\
	\sum_{t=1}^T\E\left[\Upsilon_k^t\right] & \leq \frac{2n\E\left[\Upsilon_k^1\right]}{B\beta_{0,k}} + \frac{2T n^2  \sigma_f^2\beta_{0,k} }{B} + \frac{10 \eta^2c_u^2n^3L_f^2}{B^2\beta_{0,k}^2}\sum_{t=1}^T\Norm{\m^t}^2 ,\quad k=1.
\end{align*}
We can further derive that
\begin{align*}
\sum_{t=1}^T\E\left[\Upsilon_k^t\right] & \leq \sum_{\iota=1}^k \frac{2n\beta_{0,\iota}\E\left[\Upsilon_{\iota}^1\right]}{B\beta_{0,k}^2} \left(\frac{10n^2L_f^2}{B}\right)^{k- \iota} + \sum_{\iota=1}^k \frac{2T n^2\sigma_f^2 \beta_{0,\iota}^3}{B\beta_{0,k}^2}\left(\frac{10n^2L_f^2}{B}\right)^{k- \iota} \\
& \quad\quad + \sum_{\iota=2}^k \frac{10 T n^3 \sigma_f^2 L_f^2 \beta_{0,\iota-1}^2}{B\beta_{0,k}^2} \left(\frac{10n^2L_f^2}{B}\right)^{k - \iota} + \left(\frac{10n^2L_f^2}{B}\right)^{k-1}\frac{10\eta^2c_u^2n^3L_f^2}{B^2\beta_{0,k}^2}\sum_{t=1}^T \E\left[\Norm{\m^t}^2\right].
\end{align*}
Then, 
\begin{align*}
	& \sum_{t=1}^T \sum_{i=1}^K C_k^2\Upsilon_k^{t+1} \\
	& \leq \sum_{k=1}^K\sum_{\iota=1}^k \frac{2C_k^2 n\beta_{0,\iota}\E\left[\Upsilon_{\iota}^2\right]}{B\beta_{0,k}^2} \left(\frac{10n^2L_f^2}{B}\right)^{k- \iota} +  \sum_{k=1}^K\sum_{\iota=1}^k \frac{2C_k^2 T n^2\sigma_f^2 \beta_{0,\iota}^3}{B\beta_{0,k}^2}\left(\frac{10n^2L_f^2}{B}\right)^{k- \iota}\\
	& \quad\quad + \sum_{i=1}^K \sum_{\iota=2}^k \frac{10 C_k^2 T n^3 \sigma_f^2 L_f^2 \beta_{0,\iota-1}^2}{B\beta_{0,k}^2} \left(\frac{10n^2L_f^2}{B}\right)^{k- \iota} + \left(\sum_{t=2}^{T+1} \E\left[\Norm{\m^t}^2\right]\right)\sum_{k=1}^KC_k^2\left(\frac{10n^2L_f^2}{B}\right)^{k-1}\frac{10\eta^2c_u^2n^3L_f^2}{B^2\beta_{0,k}^2}.
\end{align*}
Plug the inequality above into \eqref{eq:unrolled_grad_recursion}.
\begin{align*}
		\sum_{t=1}^T \E\left[\Phi^t\right] & \leq \frac{\E\left[\Phi^1\right]}{\beta_1} + \frac{4\eta^2c_u^2 L_F^2}{\beta_1^2}\sum_{t=1}^T \E\left[\Norm{\m^t}^2\right] + \left(\sum_{t=2}^{T+1} \E\left[\Norm{\m^t}^2\right]\right)\sum_{k=1}^KC_k^2\left(\frac{40K n^2L_f^2}{B}\right)^{k-1}\frac{10\eta^2c_u^2n^3L_f^2}{B^2\beta_{0,k}^2}\\
	& \quad\quad +  \sum_{k=1}^K\sum_{\iota=1}^k \frac{8KC_k^2 n\beta_{0,\iota}\E\left[\Upsilon_{\iota}^2\right]}{B\beta_{0,k}^2} \left(\frac{10n^2L_f^2}{B}\right)^{k- \iota} +  \sum_{k=1}^K\sum_{\iota=1}^k \frac{8KC_k^2 T n^2\sigma_f^2 \beta_{0,\iota}^3}{B\beta_{0,k}^2}\left(\frac{10n^2L_f^2}{B}\right)^{k- \iota}\\
	& \quad\quad + \sum_{k=1}^K \sum_{\iota=2}^k \frac{40 K C_k^2 T n^3 \sigma_f^2 L_f^2 \beta_{0,\iota-1}^2}{B\beta_{0,k}^2} \left(\frac{10n^2L_f^2}{B}\right)^{k- \iota} + \beta_1 T C_\Delta .
\end{align*}
Next, we plug the inequality above into \eqref{eq:base} and divide $T$ on both sides.
\begin{align*}
	& \frac{1}{T}\sum_{t=1}^T\E\left[\Norm{\nabla F(\w^t)}^2\right] \\
	& \leq \frac{2(F(\w^t) - F^*)}{\eta T c_l} + \frac{c_u\E\left[\Phi^1\right]}{\beta_1 Tc_l }+  \sum_{k=1}^K\sum_{\iota=1}^k \frac{8KC_k^2 nc_u\beta_{0,\iota}\E\left[\Upsilon_{\iota}^2\right]}{c_l B\beta_{0,k}^2 T} \left(\frac{10n^2L_f^2}{B}\right)^{k- \iota}\\
	& 	\quad\quad - \frac{1}{2T}\sum_{t=1}^T \E\left[\Norm{\m^t}^2\right] + \frac{4\eta^2c_u^3 L_F^2}{c_l \beta_1^2 T}\sum_{t=1}^T \E\left[\Norm{\m^t}^2\right] + \left(\frac{1}{T}\sum_{t=2}^{T+1} \E\left[\Norm{\m^t}^2\right]\right)\sum_{k=1}^KC_k^2\left(\frac{40K n^2L_f^2}{B}\right)^{k-1}\frac{10\eta^2c_u^3n^3L_f^2}{c_lB^2\beta_{0,k}^2}\\
	& \quad\quad +  \sum_{k=1}^K\sum_{\iota=1}^k \frac{8Kc_uC_k^2 n^2\sigma_f^2 \beta_{0,\iota}^3}{c_l B\beta_{0,k}^2}\left(\frac{10n^2L_f^2}{B}\right)^{k- \iota}  + \sum_{k=1}^K \sum_{\iota=2}^k \frac{40 c_u K C_k^2 n^3 \sigma_f^2 L_f^2 \beta_{0,\iota-1}^2}{c_l B\beta_{0,k}^2} \left(\frac{10n^2L_f^2}{B}\right)^{k- \iota} + \frac{\beta_1 c_u C_\Delta}{c_l}.
\end{align*}
We can make $ \frac{1}{T}\sum_{t=1}^T\E\left[\Norm{\nabla F(\w^t)}^2\right]\leq \epsilon^2$ if we set $\eta = O(\epsilon^K)$, $\beta_1 = O(\epsilon^K)$, $\beta_2 \in (0,1)$, $\beta_{0,k} = O(\epsilon^{K-k})$, $1\leq k\leq K$, and $T = O(\epsilon^{-(K+2)})$.
\end{proof}

\end{document}